\providecommand{\customgenericname}{}
\newcommand{\newcustomtheorem}[2]{%
  \newenvironment{#1}[1]
  {%
   \renewcommand\customgenericname{#2}%
   \renewcommand\theinnercustomgeneric{##1}%
   \innercustomgeneric
  }
  {\endinnercustomgeneric}
}
\theoremstyle{plain}  
\newtheorem{theorem}{Theorem}
\newtheorem{lemma}[theorem]{Lemma}
\newtheorem{proposition}[theorem]{Proposition}
\algnewcommand\algorithmicinput{\textbf{Input:}}
\algnewcommand\algorithmicoutput{\textbf{Output:}}
\algnewcommand\Input{\item[\algorithmicinput]}%
\algnewcommand\Output{\item[\algorithmicoutput]}%
\DeclareMathOperator*{\argmin}{arg\,min}
\newcommand{\remove}[1]{}
\title{\LARGE \bf
Integrating Online Learning and Connectivity Maintenance for Communication-Aware Multi-Robot Coordination
}
\author{Yupeng Yang$^1$, Yiwei Lyu$^2$, Yanze Zhang$^1$, Ian Gao$^1$, and Wenhao Luo$^1$
\thanks{$^*$This work was supported in part by the U.S. National Science Foundation under Grant CMMI-2301749.}
\thanks{$^1$The authors are with the Department of Computer Science, University of North Carolina at Charlotte, Charlotte, NC 28223, USA. Email: {\tt \{yyang52, yzhang94, igao, wenhao.luo\}@uncc.edu}}
\thanks{$^{2}$The author is with the Department of Electrical and Computer Engineering, Carnegie Mellon University, Pittsburgh, PA 15213, USA. Email: {\tt yiweilyu@andrew.cmu.edu}}}
\begin{document}

\maketitle
\thispagestyle{empty}
\pagestyle{empty}

\begin{abstract}
This paper proposes a novel data-driven control strategy for maintaining connectivity in networked multi-robot systems. Existing approaches often rely on a pre-determined communication model specifying whether pairwise robots can communicate given their relative distance to guide the connectivity-aware control design, which may not capture real-world communication conditions. To relax that assumption, we present the concept of Data-driven Connectivity Barrier Certificates, which utilize Control Barrier Functions (CBF) and Gaussian Processes (GP) to characterize the admissible control space for pairwise robots based on communication performance observed online. This allows robots to maintain a satisfying level of pairwise communication quality (measured by the received signal strength) while in motion. 
Then we propose a Data-driven Connectivity Maintenance (DCM) algorithm that combines (1) online learning of the communication signal strength and (2) a bi-level optimization-based control framework for the robot team to enforce global connectivity of the realistic multi-robot communication graph and minimally deviate from their task-related motions. We provide theoretical proofs to justify the properties of our algorithm and demonstrate its effectiveness through simulations with up to 20 robots.
\end{abstract}

\section{Introduction}
In multi-robot systems, establishing a well-connected communication network is critical to ensure effective collaboration through smooth peer-to-peer information exchange and interactions among robots~\cite{parker2016multiple}. Examples of applications include environmental monitoring~\cite{luo2018adaptive}, disaster response, collaborative search and rescue operations~\cite{drew2021multi}, and agricultural activities~\cite{polic2021compliant}. In communication-constrained environments with varying inter-robot communication quality, it is essential to adaptively constrain robots' motion to maintain group cohesion. Namely, the multi-robot communication graph remains \textit{globally connected} as one component, where any pairwise robots can communicate through a set of communication edges on the graph. 

\begin{figure}[ht]
\centering
\begin{subfigure}[b]{0.4\textwidth}
  \includegraphics[width=0.9\linewidth,height=0.4\linewidth]{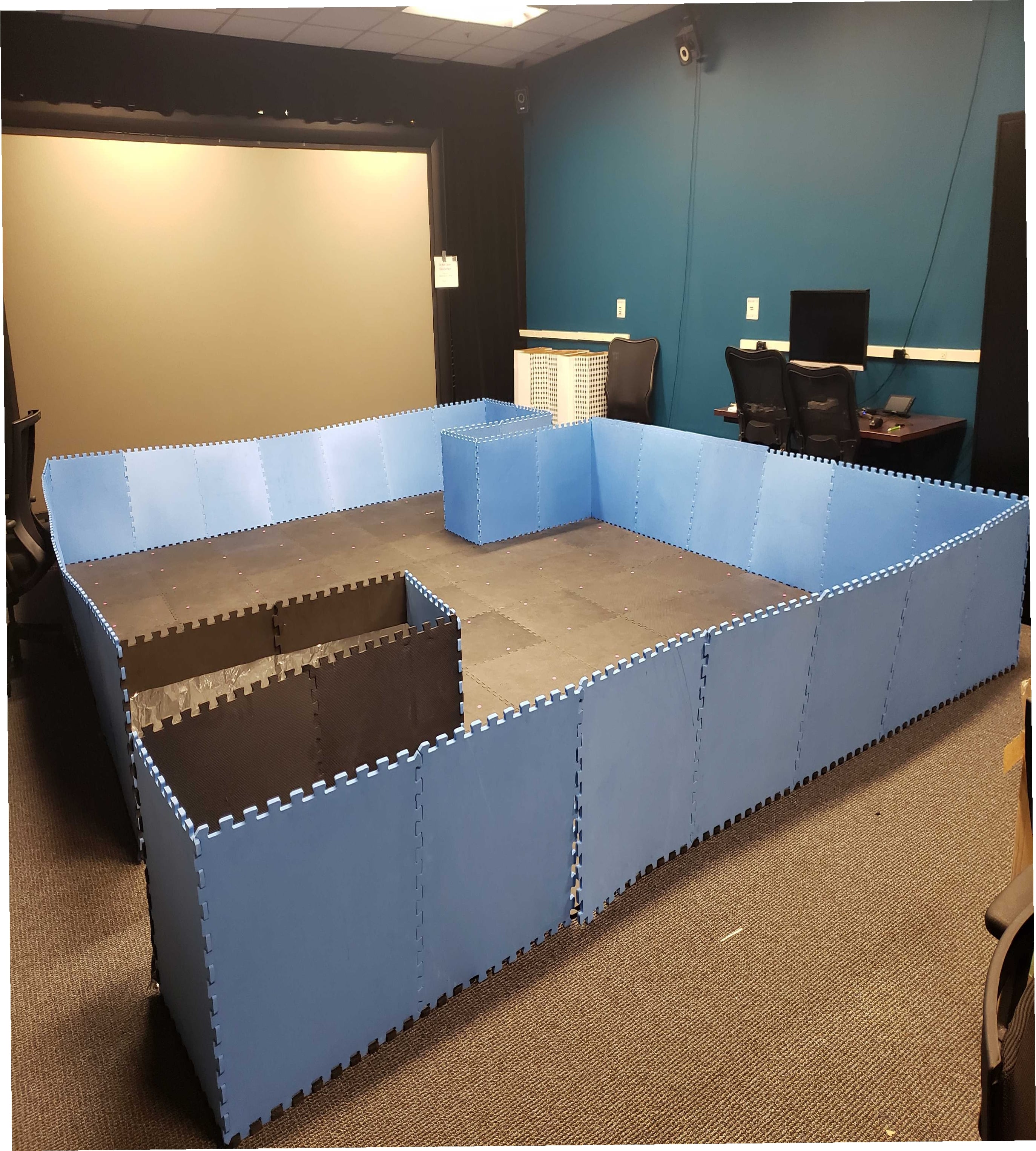}
 \caption{Real-World Environment Set-up}
  \label{fig1:real_world}
  \end{subfigure}
\begin{subfigure}[b]{0.2\textwidth}
  \includegraphics[width=1.0\linewidth]{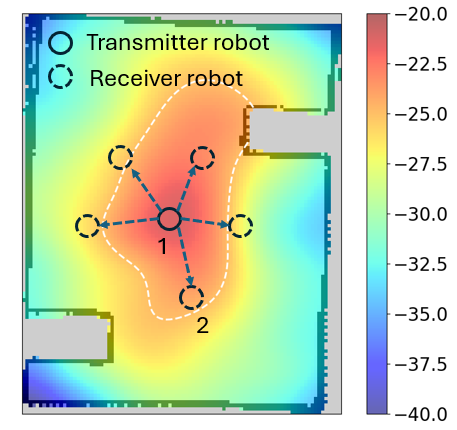}
 \caption{}
  \label{fig1:example_1}
   \end{subfigure}
\begin{subfigure}[b]{0.2\textwidth}
  \includegraphics[width=0.98\linewidth]{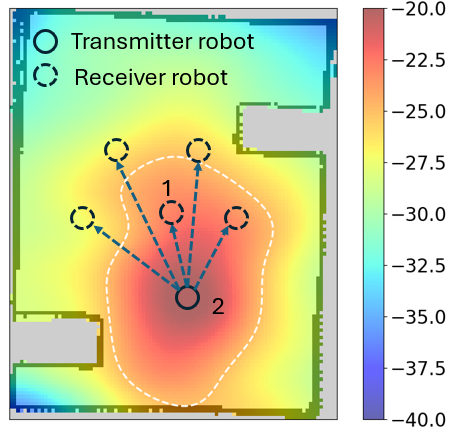}
 \caption{}
  \label{fig1:exampl_2}
   \end{subfigure}
  \caption{Example distributions of Received Signal Strength Index (RSSI) as inter-robot communication signal strength measurements in the real world. a) The lab environment depicting the real-world setting, b) RSSI value (dB) distribution over the space from the Wi-Fi transmitter Robot $1$, and c) RSSI value (dB) distribution over the space from the Wi-Fi transmitter Robot $2$. The white contour edges denote the level sets of RSSI = -25dB in (b) and (c), respectively, highlighting the asymmetric communication performance in real-world scenarios, e.g., the strength of wireless signal received by robot 2 from robot 1 is weaker than the one received by robot 1 from robot 2.}
  \label{fig:set_up}
\end{figure}

Previous research has predominantly concentrated on maintaining communication networks using disc-based or exponential function-based communication models, ensuring that pairs of robots remain within a predetermined distance to facilitate network connectivity~\cite{sabattini2013decentralized, luo2020behavior,wang2016multi,yang2023minimally,Yang-RSS-24}. However, such rigid assumptions may be unrealistic in the real world due to the sensitive wireless communication performance that can vary under different conditions. As depicted in Figures~\ref{fig1:example_1} and~\ref{fig1:exampl_2}, real-world scenarios often reveal that communication channels can be asymmetric and significantly influenced by environmental factors. This observation suggests that relying solely on simplified communication models may not guarantee reliable communication~\cite{fink2013robust}.

Data-driven approaches such as Gaussian Processes (GP) \cite{williams2006gaussian} have gained prominence for their effectiveness in modeling realistic and reliable communication channels \cite{banfi2017multirobot,clark2022propem}.
GP offers the advantage of providing a probabilistic framework, allowing for the incorporation of uncertainty into the communication model, thus leading to more robust and adaptable connectivity maintenance strategies. These GP-based approaches implicitly incorporate realistic factors such as path loss, shadowing, and fading effects, thereby providing a practical understanding of communication dynamics~\cite{fink2013robust, muralidharan2021communication}. For example, Gaussian Processes have been utilized in~\cite{fink2013robust} to accurately predict Received Signal Strength Indicator (RSSI) values, which is a direct indicator of the wireless signal's strength and quality.  
The approach in~\cite{fink2013robust} has allowed the formulation of connectivity maintenance strategies as second-order cone programs (SOCP). In addition, in~\cite{liu2017communication}, connectivity maintenance is approached by embedding a nonlinear constraint on the controller within a mixed-integer problem. Alternatively, a recent study \cite{clark2022propem} also utilizes physics-informed models, such as path loss models \cite{hata1980empirical}, to enhance the accuracy of RSSI predictions by incorporating signal propagation characteristics. Although these approaches have enabled communication-aware motion of a single robot \cite{muralidharan2021communication} or online communication modeling using a group of robots \cite{banfi2017multirobot}, how to design online learning and control strategies for efficient communication-aware multi-robot coordination remains a challenge.

To the best of our knowledge, few works enable robots to learn communication performance while progressing to their original task-related motion with provable guarantees in realistic communication connectivity. 
Due to the lack of information beforehand, robots must rely on communication quality data measured online to adaptively constrain their motion, so that any pairwise robots can effectively exchange information through the connected communication graph. 
On the other hand, it is desired that the connectivity maintenance 
minimally impacts the task-related robots' motion to provide the greatest flexibility for performing their original tasks.
Thus, our contributions are summarized as follows.
\begin{itemize}
\item A real-world point-to-point Received Signal Strength Indicator (RSSI) dataset is collected;
\item A novel Data-driven Connectivity Barrier Certificate is proposed, leveraging online RSSI data via GP and Control Barrier Functions~\cite{ames2019control, khan2022gaussian} to ensure "high-quality" communication between pairwise robots;
\item A novel Data-driven Connectivity Maintenance Algorithm (DCM) is presented to address a bi-level optimization problem, wherein the lower-level task focuses on selecting data-driven connectivity barrier certificate based control constraints, and the upper-level task resolves the resultant constrained control optimization problem to ensure global connectivity and collision avoidance among robot team with minimally disrupted robots' motion. Theoretical proofs and simulation results are presented to prove the efficacy of our approach.
\end{itemize}

\section{Preliminaries}\label{sec:pre}
In this paper, we consider an $N$-agent robotic team moving in a shared workspace, which consists of free space and occupied space $\mathcal{C}_{\mathrm{obs}}=\bigcup\limits_{k'=1}^{K'} \mathcal{O}_{k'}$ by $K'$ static polyhedral obstacles $\mathcal{O}_{k'} \subset \mathbb{R}^{d},\forall k'$, whose positions are assumed to be known by the robots. For each agent $i\in \mathcal{I}= \{1,..,N\}$, the dynamics can be captured by the following control affine form:
\begin{equation}
\begin{split}
\label{dynamics}
     &\dot{\mathbf{x}}_i = F_i(\mathbf{x}_i) +G_i(\mathbf{x}_i)\mathbf{u}_i, 
\end{split}
\end{equation}
where $\mathbf{x}_i\in\mathcal{X}_i\in\mathbb{R}^{d}$ denotes the state of agent $i$ and $\mathbf{u}_i\!\in\!\mathcal{U}_i\!\subset\!\mathbb{R}^{q}$ denotes the control input. $F_i:\mathbb{R}^{d} \mapsto \mathbb{R}^{d}$ and $G_i\!:\!\mathbb{R}^{d}\!\mapsto\!\mathbb{R}^{d\times q}$ are locally Lipschitz continuous.

\subsection{Collision Avoidance}
In this paper, it is assumed that $K'$ static polyhedral obstacles are represented as $L$ discretized obstacles, modeled as rigid spheres along the boundaries of the static obstacles \cite{thirugnanam2021duality}. Each discretized obstacle can be denoted as $o \in \{1,...,L\}$.
Then for the robot team, given the joint robot state $\mathbf{x} = \{\mathbf{x}_1,...,\mathbf{x}_N\} 
\in \mathcal{X} 
\subset \mathbb{R}^{dN}$, the discretized joint obstacle state $\mathbf{x}^\mathrm{obs}=\{\mathbf{x}^\mathrm{obs}_1,\ldots,\mathbf{x}^\mathrm{obs}_L\}\in\mathcal{X}^\mathrm{obs}\subset \mathbb{R}^{dL}$, the minimum inter-robot safe distance as $R_\mathrm{s}\in\mathbb{R}$, and the minimum obstacle-robot safe distance as $R_\mathrm{obs}\in\mathbb{R}$, the desired sets satisfying inter-robot or robot-obstacle collision avoidance can be defined as: 
\begin{align}
&h^\mathrm{s}_{i,j}(\mathbf{x}) = ||\mathbf{x}_i -\mathbf{x}_j||^2 -R_\mathrm{s}^2, \forall i>j, \notag\\
&\mathcal{H}^\mathrm{s}_{i,j} = \{\mathbf{x} \in \mathbb{R}^{dN} | h^\mathrm{s}_{i,j}(\mathbf{x}) \geq 0 \} \label{eq:h_safe_per}\\
&h^\mathrm{obs}_{i,o}(\mathbf{x},\mathbf{x}^\mathrm{obs}) = ||\mathbf{x}_i -\mathbf{x}^\mathrm{obs}_o||^2 -R_\mathrm{obs}^2, \forall i,o,\notag\\
&\mathcal{H}^\mathrm{obs}_{i,o} = \{\mathbf{x} \in \mathbb{R}^{dN}, \mathbf{x}^\mathrm{obs}\in\mathbb{R}^{dL}| h^\mathrm{obs}_{i,o}(\mathbf{x},\mathbf{x}^\mathrm{obs}_{o}) \geq 0 \} \label{eq:h_o_safe}
 \end{align}
 
To design control constraints that enforce forward invariance of the desired set $ \mathcal{H}(\mathbf{x})$, we summarize the results using Control Barrier Functions (CBF) \cite{ames2019control} as follows.
 
\begin{lemma}
\label{Lemma 1}\label{lem:cbf}[summarized from \cite{ames2019control}]
Given a dynamical system affine in control and a desired set $\mathcal{H}$ as the 0-superlevel set of a continuous differentiable function $h: \mathcal{X} \mapsto \mathbb{R}$, the function $h$ is called a control barrier function, if there exists an extended class-$\mathcal{K}$ function $\kappa(\cdot)$ such that $\sup_{\mathbf{u}\in\mathcal{U}}\{\dot{h}(\mathbf{x},\mathbf{u})+\kappa(h(\mathbf{x}))\}\geq 0$ for all $\mathbf{x} \in \mathcal{X}$. 
Any Lipschitz continuous controller $\mathbf{u}$ in the admissible control space $\mathcal{B}(\mathbf{x})$ rendering $\mathcal{H}$ forward invariant (i.e., keeping the system state $\mathbf{x}$ staying in $\mathcal{H}$ overtime) thus becomes:
$\mathcal{B}(\mathbf{x}) = \{ \mathbf{u}\in \mathcal{U} | \dot{h}(\mathbf{x},\mathbf{u}) + \kappa(h(\mathbf{x}))\geq 0 \}$.
\end{lemma}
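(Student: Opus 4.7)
The plan is to establish forward invariance by combining the differential inequality implied by membership in $\mathcal{B}(\mathbf{x})$ with a comparison argument on the scalar process $t\mapsto h(\mathbf{x}(t))$. Since $h$ is continuously differentiable, $F_i,G_i$ are locally Lipschitz, and the candidate controller $\mathbf{u}$ is itself Lipschitz continuous, the closed-loop vector field $F_i(\mathbf{x})+G_i(\mathbf{x})\mathbf{u}(\mathbf{x})$ is locally Lipschitz, so standard ODE theory guarantees a unique maximal solution $\mathbf{x}(t)$ for any initial condition in $\mathcal{H}$. Along this solution, the chain rule gives $\frac{d}{dt}h(\mathbf{x}(t))=\dot h(\mathbf{x}(t),\mathbf{u}(\mathbf{x}(t)))$, and membership of $\mathbf{u}(\mathbf{x})$ in $\mathcal{B}(\mathbf{x})$ directly yields the differential inequality $\frac{d}{dt}h(\mathbf{x}(t))\geq -\kappa(h(\mathbf{x}(t)))$.

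Next, I would invoke the comparison lemma against the scalar initial value problem $\dot y=-\kappa(y)$, $y(0)=h(\mathbf{x}(0))\geq 0$. Because $\kappa$ is an extended class-$\mathcal{K}$ function, we have $\kappa(0)=0$ and $\kappa$ is monotonically increasing; hence $y\equiv 0$ is the equilibrium for any initial condition $y(0)=0$, and for $y(0)>0$ the solution $y(t)$ remains non-negative for all $t\geq 0$. The comparison lemma then implies $h(\mathbf{x}(t))\geq y(t)\geq 0$ for all $t$ in the maximal interval of existence, which in particular precludes finite-time escape from $\mathcal{H}$ and extends the solution globally. This is exactly the statement that $\mathcal{H}$ is forward invariant under any such controller.

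The subtlety I would flag, rather than a genuine obstacle, is ensuring that the regularity hypotheses line up with the comparison lemma: one needs $\kappa$ to be at least continuous (which follows from the extended class-$\mathcal{K}$ assumption) so that uniqueness of the majorizing ODE holds, and one needs $\mathbf{u}$ to be Lipschitz rather than merely measurable so that the closed-loop trajectory is well-defined and absolutely continuous. Both are assumed in the lemma statement. The converse direction, namely that the existence of some $\mathbf{u}\in\mathcal{U}$ achieving $\dot h+\kappa(h)\geq 0$ pointwise is enough to guarantee a \emph{Lipschitz} selection $\mathbf{u}(\mathbf{x})\in\mathcal{B}(\mathbf{x})$, is the part that is usually delegated to auxiliary arguments (e.g., a min-norm quadratic program selection of $\mathbf{u}$); since the lemma as stated only asserts invariance for a controller already known to lie in $\mathcal{B}(\mathbf{x})$, I would simply cite this standard selection result rather than reprove it, in line with the ``summarized from \cite{ames2019control}'' attribution.
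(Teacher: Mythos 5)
The paper never proves this lemma: it is stated as background, ``summarized from'' the cited CBF survey of Ames et al., and used as a black box to build the admissible sets in (\ref{eq:b_safe})--(\ref{eq:b_obs}) and later the data-driven certificates. So there is no in-paper argument to compare against; the relevant comparison is with the standard proof in the cited literature, and your sketch is essentially that proof: local Lipschitz closed loop $\Rightarrow$ unique maximal solution, chain rule plus membership in $\mathcal{B}(\mathbf{x})$ $\Rightarrow$ $\tfrac{d}{dt}h(\mathbf{x}(t))\geq-\kappa(h(\mathbf{x}(t)))$, then the comparison principle for the scalar ODE $\dot y=-\kappa(y)$ keeps $h$ nonnegative. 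That structure is correct, and your remark that the lemma only asserts invariance for controllers already in $\mathcal{B}(\mathbf{x})$ (so no Lipschitz-selection argument is needed) is also the right reading of the statement.

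Two small technical points are worth tightening. First, mere continuity of $\kappa$ does \emph{not} give uniqueness of solutions of $\dot y=-\kappa(y)$ (consider square-root-type nonlinearities at the origin); the clean fix is either to assume $\kappa$ locally Lipschitz, as the cited reference effectively does, or to run the comparison argument against the maximal solution of the majorizing ODE, for which continuity suffices. Second, staying in $\mathcal{H}$ does not by itself ``extend the solution globally'': if $\mathcal{H}$ is unbounded the state can still blow up in finite time while $h\geq 0$ holds. Forward invariance should therefore be asserted on the maximal interval of existence (or under an added compactness/completeness hypothesis), which is all the lemma needs.
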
 
With Lemma~\ref{lem:cbf}, the admissible control space for robots to stay collision-free can thus be introduced as follows \cite{wang2017safety}.

\begin{align}
   & \mathcal{B}^\mathrm{s}(\mathbf{x}) = \{ \mathbf{u}\in \mathbb{R}^{qN} : \dot{h}^\mathrm{s}_{i,j}(\mathbf{x},\mathbf{u}) + \gamma h^\mathrm{s}_{i,j}(\mathbf{x})\geq 0, \forall i > j \} \label{eq:b_safe}\\
   & \mathcal{B}^\mathrm{obs}(\mathbf{x},\mathbf{x}^\mathrm{obs}) = \{ \mathbf{u}\in \mathbb{R}^{qN} : \notag\\
   &\qquad \qquad \qquad \dot{h}^\mathrm{obs}_{i,o}(\mathbf{x},\mathbf{x}^\mathrm{obs},\mathbf{u}) + \gamma h^\mathrm{obs}_{i,o}(\mathbf{x},\mathbf{x}^\mathrm{obs})\geq 0, \forall i,o \}\label{eq:b_obs}
\end{align}
where $\gamma$ is a user-defined parameter as the particular choice of $\kappa(h(\mathbf{x}))=\gamma h(\mathbf{x})$ as in \cite{luo2020behavior}. In \cite{wang2017safety} it is proven that the admissible controls of $\mathcal{B}^\mathrm{s}(\mathbf{x}),\mathcal{B}^\mathrm{obs}(\mathbf{x},\mathbf{x}^\mathrm{obs})$ make the safety sets $\mathcal{H}^\mathrm{s}_{i,j}$ and $\mathcal{H}^\mathrm{obs}_{i,o}$ forward invariant.

\subsection{Communication Model}\label{sec:communicaiton_model}
\begin{figure}[ht]
\centering
  \includegraphics[width=0.5\linewidth]{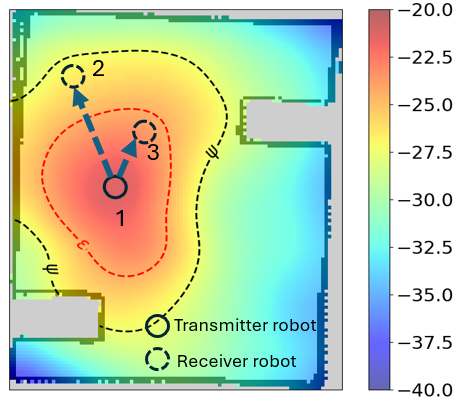}
 \caption{Example of our considered communication model: black and red contour edges indicate the level sets of RSSI values of $\psi$dB and $\epsilon$dB ($\psi<\epsilon$), respectively. $\mathcal{R}_{1,3}> \epsilon$ indicates that robot $3$ is strongly connected to transmitter robot $1$ with  $\mathcal{R}_{1,2}> \psi$ and $\mathcal{R}_{1,3}> \psi$ indicates that robots $3$ and $2$ can reliably measure the RSSI value of signals sent from robot $1$.}
  \label{fig:communication model}
\end{figure}
In this paper, the robot team communicates with each other through Wi-Fi, and we model the robot communication channel by estimating the received signal strength index (RSSI) \cite{fink2013robust}. 
We denote the RSSI value from transmitter robot $i$ to receiver robot $j$ as $\mathcal{R}_{i,j}(\mathbf{x})$. 
The closer the value $\mathcal{R}_{i,j}(\mathbf{x}) \in (-100,0) $ is to 0, the stronger the signal strength will be. 
We assume that if the $\mathcal{R}_{i,j}(\mathbf{x})$ is larger or equal than threshold $\psi \in \mathbb{R}^{-}$, then the robot $j$ can measure the RSSI value from robot $i$ using Wi-Fi antenna. To guarantee the receiver robot $j$ can be adequately connected to the transmitter robot $i$, we define the constraint $\mathcal{R}_{i,j}(\mathbf{x})\geq \epsilon$ to represent strong communication, where $\epsilon \in \mathbb{R}^{-}$ and $\epsilon > \psi$. In general, communication performance may not be symmetric (i.e., $\mathcal{R}_{i,j}(\mathbf{x})\neq\mathcal{R}_{j,i}(\mathbf{x})$). To guarantee robot $i$ and robot $j$ are strongly connected with each other, we require both $\mathcal{R}_{i,j}(\mathbf{x})$ and $\mathcal{R}_{j,i}(\mathbf{x})$ are larger than the threshold $\epsilon$. In this way, we have an undirected communication graph $\mathcal{G}=(\mathcal{V},\mathcal{E})$ where the undirected edge $(v_i,v_j) \in \mathcal{E}$ between robot $i$ and $j$ are \textit{strongly connected} (i.e., $(v_i,v_j)\Leftrightarrow(v_j,v_i)$ and $\mathcal{R}_{i,j}(\mathbf{x}),\mathcal{R}_{j,i}(\mathbf{x})\geq \epsilon$) with each node $v \in \mathcal{V}$ representing a robot. As depicted in Fig.~\ref{fig:communication model}, the robot $1$ is the transmitter robot and robot $2,3$ are receiver robots. Since $\mathcal{R}_{1,2}(\mathbf{x}) \geq \psi$ and $\mathcal{R}_{1,3}(\mathbf{x}) \geq \psi$, then robot $2$ and $3$ can both take the RSSI measurement from robot $1$. However, only robot $3$ is considered to be strongly connected to robot $1$ since only $\mathcal{R}_{1,3}(\mathbf{x}) \geq \epsilon$. For ease of notation, we only show the notion of dependence on time if necessary, i.e., $\mathcal{G} = \mathcal{G}(t)$. The graph is said to be \textbf{\textit{strongly connected}} or \textbf{\textit{globally connected}} if there is at least one path with strongly connected edges connecting any pairwise robots in the team.

\subsection{Gaussian Process Regression}
Gaussian Process (GP) regression \cite{williams2006gaussian} is a non-parametric approach that can learn complex functions without assuming a predetermined form for the underlying model. GP is characterized by a mean function, typically assumed to be zero for simplicity, and a covariance function $k(\mathbf{z},\mathbf{z}'):\mathcal{Z}\times\mathcal{Z}\mapsto\mathbb{R}$. Formally, a GP model can be defined as $\mathcal{GP}(0, k(\mathbf{z}, \mathbf{z'}))$,
where $\mathcal{Z} \subset \mathbb{R}^{a}$ denotes the input domain. The covariance matrix $k(\mathbf{z},\mathbf{z}')$ in $\mathcal{GP}$ is used to measure the similarity between input points $\mathbf{z}$ and  $\mathbf{z'}$ which requires to be positive semi-definite \cite{williams2006gaussian}.

Consider a dataset of \(Q\) observations with input vectors $X$
and corresponding scalar observations \(y \in \mathbb{R} \). We arrange this dataset as $\mathcal{D} =\{\mathbf{X}_{q'} , \mathbf{y}_{q'} \}_{q'=1}^Q$, where $X_Q= \{\mathbf{X}_{q'}\}^{Q}_{q'=1}$ and $y_Q =\{\mathbf{y}_{q'}\}^{Q}_{q'=1}$. Using GP, one can derive the posterior mean and variance for any deterministic query input \(\mathbf{X}_* \in \mathbb{R}^n\), based on previous observations. The posterior mean \(\mu\) and variance \(\sigma^2\) for the input $\mathbf{x}_*$ are provided by: 
\begin{align}
&\mu(\mathbf{x}_*) = \mathbf{k}(\mathbf{X}_*)^T K^{-1} y_Q \label{eq:h_mu} \\
&\sigma^2(\mathbf{x}_*) =  k(\mathbf{X}_*, \mathbf{X}_*) -  \mathbf{k}(\mathbf{X}_*)^T K^{-1}  \mathbf{k}(\mathbf{X}_*) \label{eq:h_sigma}
\end{align}
where \( \mathbf{k}(\mathbf{X}_*) = [k(\mathbf{X}_1, \mathbf{X}_*), \ldots, k(\mathbf{X}_{Q}, \mathbf{X}_*)]^T \in \mathbb{R}^Q\) is the covariance vector between $X_Q$ and \(\mathbf{X}_*\), and \(K \in \mathbb{R}^{Q \times Q}\), with entries \(k_{q',p} = k(\mathbf{X}_{q'}, \mathbf{X}_p)\), for \(q', p \in \{1, \ldots, Q\}\), denoting the covariance matrix for the input dataset \(X_Q\)\cite{williams2006gaussian}. 

\subsection{Gaussian Control Barrier Function}
Using the posterior mean \(\mu\) and variance \(\sigma^2\) from $\mathcal{GP}$, a function $h_\text{gp}(\mathbf{x})$ can be constructed below to define the safety region for the robot \cite{khan2022gaussian}.
\begin{equation}
   h_\mathrm{gp}(\mathbf{x}) = \mu(\mathbf{x}) - \sigma^{2}(\mathbf{x}) \label{eq:gaussian}
\end{equation}

Then the desired set for robot to stay in the 0-superlevel set of $h_\mathrm{gp}(\mathbf{x})$ can be defined as follows.
\begin{equation}
   \mathcal{H}_\mathrm{gp} = \{\mathbf{x} \in \mathbb{R}^{d}\; |\; h_\mathrm{gp}(\mathbf{x}) \geq 0 \} \label{eq:h_gaussian}
\end{equation}

\begin{lemma}
\label{lem:g_cbf}[summarized from \cite{khan2022gaussian}]
Given a dynamical system affine in control and a desired set $\mathcal{H}_\mathrm{gp}$ as the 0-superlevel set of a continuous differentiable function $h_\mathrm{gp}(\mathbf{x}) : \mathcal{X} \rightarrow \mathbb{R}$, the function $h_\mathrm{gp}(\mathbf{x})$ is called a Gaussian control barrier function, 
if it is constructed in the form of (\ref{eq:gaussian}) using the posterior mean and variance of Gaussian Process
with an infinitely mean-square differentiable positive definite kernel, $k(\mathbf{x}_i,\mathbf{x}_j): \mathcal{X}\times \mathcal{X} \mapsto \mathbb{R}$, and if there exists an extended class-$\mathcal{K}$ function $\kappa(\cdot)$ such that $\sup_{\mathbf{u}\in\mathcal{U}}\{\dot{h}_\mathrm{gp}(\mathbf{x},\mathbf{u})+\kappa(h_\mathrm{gp}(\mathbf{x})))\}\geq 0$ for all $\mathbf{x} \in \mathcal{X}$. 
Any Lipschitz continuous controller $\mathbf{u}$ in the admissible control space $\mathcal{S}(\mathbf{x})$ rendering $\mathcal{H}_\mathrm{gp}$ forward invariant thus becomes: $\mathcal{S}(\mathbf{x}) = \{\mathbf{u}\in \mathcal{U} | \dot{h}_\mathrm{gp}(\mathbf{x},\mathbf{u}) + \kappa(h_\mathrm{gp}(\mathbf{x}))\geq 0 \}$.
\end{lemma}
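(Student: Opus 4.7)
The plan is to reduce the statement to the standard Control Barrier Function result in Lemma~\ref{lem:cbf}, treating $h_\mathrm{gp}$ as a particular choice of barrier function $h$. To invoke Lemma~\ref{lem:cbf} one needs three ingredients: (i) a control-affine system, which is furnished by (\ref{dynamics}); (ii) a continuously differentiable function whose $0$-superlevel set is the desired safe set, which is $h_\mathrm{gp}$ together with $\mathcal{H}_\mathrm{gp}$ defined in (\ref{eq:h_gaussian}); and (iii) an extended class-$\mathcal{K}$ function $\kappa(\cdot)$ making the supremum condition non-negative, which is exactly the hypothesis of the lemma. Item (i) and (iii) are given, so the technical content of the proof is to verify (ii).

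First, I would argue that $h_\mathrm{gp}(\mathbf{x}) = \mu(\mathbf{x}) - \sigma^2(\mathbf{x})$ is continuously differentiable on $\mathcal{X}$. From (\ref{eq:h_mu})--(\ref{eq:h_sigma}), the only $\mathbf{x}$-dependence in $\mu$ and $\sigma^2$ enters through the covariance vector $\mathbf{k}(\mathbf{x}) = [k(\mathbf{X}_1,\mathbf{x}),\ldots,k(\mathbf{X}_Q,\mathbf{x})]^T$ and through $k(\mathbf{x},\mathbf{x})$; the matrix $K^{-1}$ and vector $y_Q$ are constants with respect to $\mathbf{x}$. Under the assumption that the kernel $k(\cdot,\cdot)$ is infinitely mean-square differentiable (e.g.\ squared-exponential), the map $\mathbf{x}\mapsto k(\mathbf{X}_i,\mathbf{x})$ is continuously differentiable for each training point, and so is $\mathbf{x}\mapsto k(\mathbf{x},\mathbf{x})$. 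Hence both $\mu(\mathbf{x})$ and $\sigma^2(\mathbf{x})$ are $C^1$, and so is their difference $h_\mathrm{gp}(\mathbf{x})$. The set $\mathcal{H}_\mathrm{gp}$ is the $0$-superlevel set of $h_\mathrm{gp}$ by (\ref{eq:h_gaussian}), completing item (ii).

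Having established regularity, I would then apply Lemma~\ref{lem:cbf} directly. By the hypothesis, there exists an extended class-$\mathcal{K}$ function $\kappa(\cdot)$ with
\begin{equation*}
\sup_{\mathbf{u}\in\mathcal{U}} \bigl\{ \dot h_\mathrm{gp}(\mathbf{x},\mathbf{u}) + \kappa(h_\mathrm{gp}(\mathbf{x})) \bigr\} \geq 0, \qquad \forall \mathbf{x}\in\mathcal{X},
\end{equation*}
which is exactly the CBF condition of Lemma~\ref{lem:cbf} applied to $h = h_\mathrm{gp}$ and $\mathcal{H} = \mathcal{H}_\mathrm{gp}$. The lemma therefore certifies that every Lipschitz continuous controller $\mathbf{u}$ selected from
\begin{equation*}
\mathcal{S}(\mathbf{x}) = \bigl\{ \mathbf{u}\in\mathcal{U} \,\big|\, \dot h_\mathrm{gp}(\mathbf{x},\mathbf{u}) + \kappa(h_\mathrm{gp}(\mathbf{x})) \geq 0 \bigr\}
\end{equation*}
renders $\mathcal{H}_\mathrm{gp}$ forward invariant, which is the claim.

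The only real obstacle is the differentiability argument in the first step, because it is where the specific GP structure (as opposed to a generic $h$) enters. In particular, one must justify that ``infinitely mean-square differentiable'' of the kernel transfers to pointwise $C^1$ regularity of the posterior mean and variance viewed as deterministic functions of the query input; this follows from interchanging differentiation with the (finite) linear combinations in (\ref{eq:h_mu})--(\ref{eq:h_sigma}) and from the symmetry of $k$ ensuring $\partial_\mathbf{x} k(\mathbf{x},\mathbf{x})$ is well defined. Once that is in place, everything else is an immediate invocation of Lemma~\ref{lem:cbf}.
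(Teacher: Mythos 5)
Your proposal is correct. Note that the paper itself provides no proof of this lemma---it is stated as a result summarized from \cite{khan2022gaussian}---so the natural comparison is with the reasoning the paper later deploys when proving Lemma~\ref{lem:date_Driven_connectivity}, and your argument is essentially that same route: the smoothness of the kernel makes the posterior mean and variance, hence $h_\mathrm{gp}(\mathbf{x})=\mu(\mathbf{x})-\sigma^2(\mathbf{x})$, continuously differentiable as deterministic functions of the query point, after which the claim is an immediate instance of the generic CBF result of Lemma~\ref{lem:cbf} with $h=h_\mathrm{gp}$ and $\mathcal{H}=\mathcal{H}_\mathrm{gp}$. Your closing caveat is also resolved the right way: because $\mu$ and $\sigma^2$ are finite linear-algebraic combinations of kernel evaluations (with $K^{-1}$ and $y_Q$ constant in $\mathbf{x}$), deterministic differentiability of $k$ suffices, which is the same point the paper handles by appealing to the sample-path differentiability theorem inside the proof of Lemma~\ref{lem:date_Driven_connectivity}.
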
 

With Lemma~\ref{lem:g_cbf}, the admissible control space for robots to stay in $\mathcal{H}_\mathrm{gp}$ can thus be introduced as follows \cite{khan2022gaussian}.
\begin{align}
   \mathcal{S}(\mathbf{x}) = \{ \mathbf{u}\in \mathbb{R}^{qN} : \dot{h}_\mathrm{gp}(\mathbf{x},\mathbf{u}) + \gamma h_\mathrm{gp}(\mathbf{x})\geq 0 \} \label{eq:gp_safe}
\end{align}

\subsection{Problem Statement}
In this paper, robots are assumed to be governed by a nominal task-related controller $\mathbf{u}^\text{ref} = \{\mathbf{u}^\text{ref}_{1},...,\mathbf{u}^\text{ref}_{N}\}\subset\mathbb{R}^{qN}$
to achieve their goal positions. Each robot is equipped with a Wi-Fi transmitter and receiver to gather RSSI data $\mathcal{R}_{i,j}(\mathbf{x})$ ($i$ and $j$ denote the transmitter robot $i$ and receiver robot $j$, respectively). To ensure that robots perform their assigned tasks while maintaining connectivity in environments with unknown pairwise robot signal strength indicated by RSSI value, we aim to 1) model a continuously differentiable function $h^\text{gp}_{i,j}(\mathbf{x})$ to approximate $h^\text{*gp}_{i,j} = \mathcal{R}_{i,j}(\mathbf{x}) -\epsilon$ whose 0-superlevel set delineate regions within the environment where the receiver robot $j$ is strongly connected to the transmitter robot $i$; 
2) determine the $\mathcal{G}^\mathrm{c*}$ which is the particular graph among all subgraphs of $\mathcal{G}$ and is minimally disrupting the nominal robots' controllers if being maintained, and 3) establish control constraints that enforce the selected $\mathcal{G}^\mathrm{c*}$ is strongly connected over time which results in the $\mathcal{G}\supseteq \mathcal{G}^\mathrm{c*}$ always strongly connected. Therefore, the problem can be summarized as follows.

\noindent\textit{\textbf{Problem Statement:}} 
Given a mobile robot team operating in a shared environment with nominal task-related motion and using the real-time RSSI data $\mathcal{R}_{i,j}(\mathbf{x})$ from any Wi-Fi transmitter robot $i$ and receiver robot $j$ ($\forall i,j$), we seek to learn a function $h^\text{gp}_{i,j}(\mathbf{x})$ to design corresponding control constraints and obtain the control $\mathbf{u}^*$ that minimally deviates from $\mathbf{u}^\text{ref}$ yet ensures 
$\mathcal{G}$ is strongly connected over time.

\section{Method}
\subsection{Data-driven Connectivity Barrier Certificates}
Contrary to the existing work, e.g., \cite{wang2016multi, luo2020behavior} where robots are considered connected if they fall within a predetermined distance, we introduce a data-driven method focusing on building communication between robots with RSSI using an online learning-based method. Specifically, the robot utilizes real-time RSSI measurements along with the state information of the robots to dynamically learn and encode a function $h^\text{gp}_{i,j}(\mathbf{x})$ and $h^\text{gp}_{j,i}(\mathbf{x})$ for pairwise robot communication. 
This approach enables the development of control constraints that guarantee the connectivity of the multi-robot system without relying on specific distance thresholds. In this paper, we use GP to model the continuously differentiable function $h^\text{gp}_{i,j}(\mathbf{x})$ to approximate $h^\text{*gp}_{i,j}(\mathbf{x})= \mathcal{R}_{i,j}(\mathbf{x})- \epsilon$ that $h^\text{gp}_{i,j}(\mathbf{x})\geq 0$ indicates robot $j$ is strongly connected to robot $i$ under the pairwise setting. 
As a data-driven method, it is desired to consider the inherent uncertainty in estimating of function $h^\text{gp}_{i,j}(\mathbf{x})$. Hence, we desire to formulate the function $h^\text{gp}_{i,j}(\mathbf{x})$ using the posterior mean \(\mu\) and variance \(\sigma^2\) of $\mathcal{GP}$ as $h^\text{gp}_{i,j}(\mathbf{x}) = \mu_{i,j}(\mathbf{x}) - \sigma_{i,j}^2(\mathbf{x})$. 

To learn such a function, the next step is to construct a reasonable dataset. 
Assuming that in a team of $N$ robots, there are $Q$ pairs of robots that the receiver robot $j$ is capable of measuring the RSSI information from the transmitter robot $i$. We formally define the dataset as $\mathcal{D} =\{\mathbf{X}_{q'} , \mathbf{y}_{q'} \}_{q'=1}^Q$, where $X_Q= \{\mathbf{X}_{q'}\}^{Q}_{q'=1}$ and $y_Q =\{\mathbf{y}_{q'}\}^{Q}_{q'=1}$. The ${\mathbf{X}}_{q'}$ is the transmitter-receiver robot position pair, which is denoted as $\mathbf{X}_{q'} = (\mathbf{x}^{q'}_i, \mathbf{x}^{q'}_j)$, where $q' \in [1,Q]$, $\mathbf{x}^{q'}_i$ is the position of transmitter robot, $\mathbf{x}^{q'}_j$ is the position of receiver robot, and $\mathbf{y}_{q'} = \mathcal{R}_{i,j}(\mathbf{X}_{q'}) - \epsilon$.
Note that we assume that the robot team is strongly connected initially, hence the dataset $\mathcal{D}$ is guaranteed to be non-empty. We select the squared exponential (SE) function $k_\mathrm{se}$ as our kernel function, which is an infinitely mean-square differentiable positive definite kernel~\cite{khan2022gaussian} and has been widely utilized in predicting RSSI values, e.g., \cite{hahnel2006gaussian}:
\begin{equation}
k_\mathrm{se}(\mathbf{X}_{q'}, \mathbf{X}) = \sigma_f^2 \exp\left(-\frac{d(\mathbf{X}_{q'}, \mathbf{X})^2}{2l^2}\right)
\end{equation}
where $\mathbf{X} = (\mathbf{x}_i,\mathbf{x}_j)$, and $\sigma_f$ and $l$ are hyper-parameters for $\mathcal{GP}$. The distance function $d(\mathbf{X}_{q'}, \mathbf{X})$ denotes the distance between $\mathbf{X}_{q'}$ and $\mathbf{X}$, which is defined as \cite{liu2017communication}:
\begin{equation}
    d(\mathbf{X}_{q'}, \mathbf{X}) = \left\|\begin{pmatrix} \mathbf{x}^{q'}_{i} \\ \mathbf{x}^{q'}_j \end{pmatrix} - \begin{pmatrix} \mathbf{x}_i \\ \mathbf{x}_j \end{pmatrix}\right\|
\end{equation}

With this, our $h^\mathrm{gp}_{i,j}(\mathbf{x})$ can be defined as $h^\mathrm{gp}_{i,j}(\mathbf{x}) = \mathbf{k}_\mathrm{se}(\mathbf{X})^T K_\mathrm{se}^{-1} \mathbf{y}_Q -( k_\mathrm{se}(\mathbf{X}, \mathbf{X}) -  \mathbf{k}_\mathrm{se}(\mathbf{X})^T K_\mathrm{se}^{-1}  \mathbf{k}_\mathrm{se}(\mathbf{X}))$, where \( \mathbf{k}_\mathrm{se}(\mathbf{X}) = [k_\mathrm{se}(\mathbf{X}_1, \mathbf{X}), \ldots, k_\mathrm{se}(\mathbf{X}_{Q}, \mathbf{X})]^T \in \mathbb{R}^Q\) and  \(K_\mathrm{se} \in \mathbb{R}^{Q \times Q}\) with entries \(k^{q',p}_\mathrm{se} = k_\mathrm{se}(\mathbf{X}_{q'}, \mathbf{X}_p)\), for \(q', p \in \{1, \ldots, Q\}\). Then the desired set on $\mathbf{x}$ for any receiver robot $j$ is strongly connected to transmitter robot $i$ can be defined as:
\begin{align}
\mathcal{H}^\mathrm{c}_{i,j} &= \{\mathbf{x} \in \mathbb{R}^{dN}: h^\text{gp}_{i,j}(\mathbf{x}) \geq 0\} \label{eq:H_conn_per}
\end{align}

Recall that we assume that edge $(v_i,v_j)$ between robots \( i \) and \( j \) are strongly connected if and only if both $\mathcal{R}_{i,j}(\mathbf{x})$ and $\mathcal{R}_{j,i}(\mathbf{x})$ are larger than threshold $\epsilon$. Then for the entire team with any given communication spanning graph $\mathcal{G}^{\mathrm{c}}=(\mathcal{V},\mathcal{E}^\mathrm{c})\subseteq \mathcal{G}$ to enforce, the desired set implying the robot team is strongly connected can thus be defined as: 
\begin{align}
    \mathcal{H}^\mathrm{c}(\mathcal{G}^\mathrm{c}) =\bigcap_{(v_i,v_j)\in\mathcal{E}^\mathrm{c}} \!(\mathcal{H}^\mathrm{c}_{i,j}\cap\mathcal{H}^\mathrm{c}_{j,i})\label{eq:graph_inter}
\end{align}

Following Lemma~\ref{lem:cbf}, we formally define our Data-driven Connectivity Barrier Certificate as follows.
\begin{lemma}
\label{lem:date_Driven_connectivity}
\textbf{Data-driven Connectivity Barrier Certificate:} Given any communication spanning graph $\mathcal{G}^{\mathrm{c}}=(\mathcal{V},\mathcal{E}^\mathrm{c})\subseteq \mathcal{G}$, and a desired set $\mathcal{H}^\mathrm{c}(\mathcal{G}^\mathrm{c}) $ in~(\ref{eq:graph_inter}) with $h^\text{gp}_{i,j}(\mathbf{x})$, for any Lipschitz continuous controller $\mathbf{u}$, the Data-driven Connectivity Barrier Certificates as admissible control space $ \mathcal{S}_\mathbf{u}(\mathbf{x},\mathcal{G}^\mathrm{c})$ defined below renders $\mathcal{H}^\mathrm{c}(\mathcal{G}^\mathrm{c})$ forward invariant:

{\footnotesize\begin{align}\label{eq:data_cbc}
\mathcal{S}_\mathbf{u}(\mathbf{x},\mathcal{G}^\mathrm{c}) = &\{\mathbf{u}\in\mathbb{R}^{dN}:\dot{h}^\mathrm{gp}_{i,j}(\mathbf{x},\mathbf{u})+\gamma h^\mathrm{gp}_{i,j}(\mathbf{x})\geq 0, \forall(v_i,v_j)\in\mathcal{E}^\mathrm{c}\} \cap \notag\\
& \{\mathbf{u}\in\mathbb{R}^{dN}:\dot{h}^\mathrm{gp}_{j,i}(\mathbf{x},\mathbf{u})+\gamma h^\mathrm{gp}_{j,i}(\mathbf{x})\geq 0 ,\forall(v_j,v_i)\in\mathcal{E}^\mathrm{c}\} 
\end{align}}
\end{lemma}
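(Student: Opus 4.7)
The plan is to reduce the joint claim to repeated applications of Lemma~\ref{lem:g_cbf}, one for each directed pair $(v_i,v_j)$ and $(v_j,v_i)$ in $\mathcal{E}^\mathrm{c}$, and then argue that the individual forward-invariance conclusions combine along a single closed-loop trajectory to make the intersection $\mathcal{H}^\mathrm{c}(\mathcal{G}^\mathrm{c})$ forward invariant. Concretely, I need to verify (i) that each $h^\mathrm{gp}_{i,j}$ satisfies the hypotheses of Lemma~\ref{lem:g_cbf} so that the edgewise Gaussian CBF argument applies, and (ii) that simultaneous satisfaction of all resulting scalar CBF inequalities by a single Lipschitz controller suffices to keep the trajectory inside the full intersection.

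For (i), I would observe that the squared-exponential kernel $k_\mathrm{se}$ is positive definite and infinitely mean-square differentiable, so the posterior mean $\mu_{i,j}(\mathbf{x})$ and variance $\sigma^2_{i,j}(\mathbf{x})$ are continuously differentiable in the joint state $\mathbf{x}\in\mathbb{R}^{dN}$, with gradients non-trivial only on the two blocks corresponding to robots $i$ and $j$. Hence $h^\mathrm{gp}_{i,j}=\mu_{i,j}-\sigma^2_{i,j}$ is a continuously differentiable function of $\mathbf{x}$, and with the extended class-$\mathcal{K}$ choice $\kappa(h)=\gamma h$ for $\gamma>0$, Lemma~\ref{lem:g_cbf} applied to $h^\mathrm{gp}_{i,j}$ yields that every Lipschitz controller satisfying $\dot{h}^\mathrm{gp}_{i,j}(\mathbf{x},\mathbf{u})+\gamma h^\mathrm{gp}_{i,j}(\mathbf{x})\geq 0$ renders $\mathcal{H}^\mathrm{c}_{i,j}$ forward invariant under the dynamics~(\ref{dynamics}); swapping the indices gives the same conclusion for $\mathcal{H}^\mathrm{c}_{j,i}$.

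For (ii), given any Lipschitz continuous $\mathbf{u}(t)\in\mathcal{S}_\mathbf{u}(\mathbf{x},\mathcal{G}^\mathrm{c})$, the single trajectory $\mathbf{x}(t)$ generated by~(\ref{dynamics}) satisfies every edgewise inequality in~(\ref{eq:data_cbc}) at once. Applying the standard comparison argument to each scalar inequality gives $h^\mathrm{gp}_{i,j}(\mathbf{x}(t))\geq h^\mathrm{gp}_{i,j}(\mathbf{x}(0))\,e^{-\gamma t}\geq 0$ for every $(v_i,v_j)\in\mathcal{E}^\mathrm{c}$ and its reverse, provided $\mathbf{x}(0)\in\mathcal{H}^\mathrm{c}(\mathcal{G}^\mathrm{c})$, so $\mathbf{x}(t)\in\bigcap_{(v_i,v_j)\in\mathcal{E}^\mathrm{c}}(\mathcal{H}^\mathrm{c}_{i,j}\cap\mathcal{H}^\mathrm{c}_{j,i})=\mathcal{H}^\mathrm{c}(\mathcal{G}^\mathrm{c})$ for all $t\geq 0$.

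The main subtlety I anticipate lies not in the edgewise invariance---which is almost immediate from Lemma~\ref{lem:g_cbf}---but in handling cleanly that each $h^\mathrm{gp}_{i,j}$ is a scalar function of the full joint state whose Lie derivative couples the controls of robots $i$ and $j$ via the chain rule, while different edges share those same robot controls; thus $\mathcal{S}_\mathbf{u}(\mathbf{x},\mathcal{G}^\mathrm{c})$ is an intersection of many linear-in-$\mathbf{u}$ half-spaces that are not independent. This coupling does not threaten the forward-invariance conclusion, which only demands pointwise satisfaction of each inequality along a common trajectory, but it does mean the lemma asserts set-invariance conditional on a Lipschitz selection from $\mathcal{S}_\mathbf{u}$ existing; non-emptiness of that set for a judiciously chosen spanning sub-graph is not part of the present claim and will be handled by the bi-level optimization introduced later in the paper.
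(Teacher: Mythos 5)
Your proposal is correct and follows the same basic skeleton as the paper's proof---verify that each $h^\mathrm{gp}_{i,j}(\mathbf{x})$ qualifies as a Gaussian control barrier function via the smoothness of the squared-exponential kernel, then conclude forward invariance of the intersection $\mathcal{H}^\mathrm{c}(\mathcal{G}^\mathrm{c})$ from simultaneous satisfaction of the edgewise constraints in~(\ref{eq:data_cbc})---but the two arguments place their weight differently. You make the intersection step fully explicit with the comparison-lemma bound $h^\mathrm{gp}_{i,j}(\mathbf{x}(t))\geq h^\mathrm{gp}_{i,j}(\mathbf{x}(0))e^{-\gamma t}$ applied to every edge along the single closed-loop trajectory, and you deliberately set aside non-emptiness of $\mathcal{S}_\mathbf{u}(\mathbf{x},\mathcal{G}^\mathrm{c})$ as outside the claim; the paper, by contrast, treats feasibility as the crux: it writes out the GP derivative $\partial h^\mathrm{gp}_{i,j}/\partial\mathbf{x}$ and $\dot{h}^\mathrm{gp}_{i,j}(\mathbf{x},\mathbf{u})$ explicitly, argues each constraint is a half-space in $\mathbf{u}$ (hence satisfiable under unbounded inputs), and invokes composition/feasibility results from \cite{egerstedt2018robot} and \cite{xiao2022sufficient} to argue the joint admissible set is non-empty, while leaving the trajectory-level invariance of the intersection more implicit via Lemma~\ref{lem:cbf}. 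Your version is the logically tighter justification of the invariance statement as literally quantified ("any Lipschitz controller in $\mathcal{S}_\mathbf{u}$"), whereas the paper's buys an assurance that the certificate is usable in the downstream QP. One small point to tighten: Lemma~\ref{lem:g_cbf} requires, beyond kernel smoothness, the condition $\sup_{\mathbf{u}\in\mathcal{U}}\{\dot{h}^\mathrm{gp}_{i,j}(\mathbf{x},\mathbf{u})+\gamma h^\mathrm{gp}_{i,j}(\mathbf{x})\}\geq 0$ for all $\mathbf{x}$ before $h^\mathrm{gp}_{i,j}$ may be called a Gaussian CBF; a one-line remark that each constraint is linear in $\mathbf{u}$ and hence satisfiable for unbounded inputs (whenever the control coefficient does not vanish), as the paper does, would make your appeal to that lemma airtight without changing your argument.
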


\begin{proof}
We first demonstrate that $h^\text{gp}_{i,j}(\mathbf{x})$ (and $h^\text{gp}_{j,i}(\mathbf{x})$ likewise) is a valid Gaussian control barrier function following~\cite{khan2022gaussian}.
Since the robots are initially strongly connected, we can guarantee that the dataset is always not empty. Hence, our selected kernel enjoys the property of infinite mean-square differentiability. With this and according to the sample path differentiability theorem \cite{eriksson2018scaling}, the partial derivative of~$h^\text{gp}_{i,j}(\mathbf{x})$ with respect to $\mathbf{x}$ at any transmitter-receiver pairwise query point $\mathbf{X} = (\mathbf{x}_i,\mathbf{x}_j)$ can be calculated as:
{\footnotesize\begin{align}
&\left.\frac{\partial h_{i,j}^{\text{gp}}(\mathbf{x})}{\partial \mathbf{x}}\right|_{\mathbf{X}} = \mathbf{y}_Q^\top K_\mathrm{se}^{-1} \left.\frac{\partial  \mathbf{k}_\mathrm{se}(\mathbf{X})}{\partial \mathbf{x}}\right|_{\mathbf{X}} + 2  \mathbf{k}_\mathrm{se}(\mathbf{X})^\top K_\mathrm{se}^{-1} \left.\frac{\partial  \mathbf{k}_\mathrm{se}(\mathbf{X})}{\partial \mathbf{x}}\right|_{\mathbf{X}} \label{eq:gp_derivate}
\end{align}}
where the kernel derivative in~(\ref{eq:gp_derivate}) at any query point $\mathbf{X}$ is given by:
\begin{equation}
\left.\frac{\partial \mathbf{k}_\mathrm{se}(\mathbf{X})}{\partial \mathbf{x}}\right|_{\mathbf{X}} =
\begin{bmatrix}
-\frac{1}{l^2}  (\mathbf{X}_1-\mathbf{X})^{\top}\mathbf{k}_\mathrm{se}(\mathbf{X}_1, \mathbf{X}) \\
\vdots \\
-\frac{1}{l^2}  (\mathbf{X}_q-\mathbf{X})^{\top}\mathbf{k}_\mathrm{se}(\mathbf{X}_q, \mathbf{X}) 
\end{bmatrix}
\end{equation}
Considering the robot dynamics, $\dot{h}^\text{gp}_{i,j}(\mathbf{x},\mathbf{u})$ finally becomes: 
\begin{equation}
    \dot{h}^\text{gp}_{i,j}(\mathbf{x},\mathbf{u}) = \frac{\partial h_{i,j}^{\text{gp}}(\mathbf{x})}{\partial \mathbf{x}}\begin{bmatrix}
        F_i(\mathbf{x}_i)\\F_j(\mathbf{x}_j)
    \end{bmatrix} + \frac{\partial h_{i,j}^{\text{gp}}(\mathbf{x})}{\partial \mathbf{x}}\begin{bmatrix}G_i(\mathbf{x}_i)\mathbf{u}_i\\   G_j(\mathbf{x}_j)\mathbf{u}_j
    \end{bmatrix}
    \label{eq:gaussian_partial_derivate}
\end{equation}

Each single linear constraint $\dot{h}^\text{gp}_{i,j}(\mathbf{x},\mathbf{u})+\gamma h^\text{gp}_{i,j}(\mathbf{x})\geq 0$ on $\mathbf{u}$ actually divides the control space into two half spaces, so considering the unbounded control input, it is always feasible to find the $\mathbf{u}$ that satisfy the control constraints $\dot{h}^\text{gp}_{i,j}(\mathbf{x},\mathbf{u})+\gamma h^\text{gp}_{i,j}(\mathbf{x})\geq 0$. In this way, we can verify that $h^\text{gp}_{i,j}(\mathbf{x})$ is a valid control barrier function. Then, consider the property of the CBF in Lemma~\ref{lem:cbf}, to enforce the intersection set $\mathcal{H}^\mathrm{c}(\mathcal{G}^\mathrm{c})$ in~(\ref{eq:graph_inter}) forward invariant so that the required strong connectivity can be satisfied, we need to guarantee that the control space constrained by $\mathcal{S}_\mathbf{u}(\mathbf{x},\mathcal{G}^\mathrm{c})$ in~(\ref{eq:data_cbc}) is always not empty. Our data-driven connectivity barrier certificates rely on the composition of the set of valid control barrier functions $h^\text{gp}_{i,j}(\mathbf{x})$. It has been proved in~\cite{egerstedt2018robot} that given any unbounded control, any controller leading the $\dot{\mathbf{x}} = \mathbf{0}$ could always be the non-empty interior of their corresponding control space. Considering any bounded control input, authors in~\cite{xiao2022sufficient} provide a sufficient condition to guarantee the feasibility of CBF-based QPs.
Thus, we can conclude that if $\mathbf{u}\in\mathcal{S}_\mathbf{u}(\mathbf{x},\mathcal{G}^\mathrm{c})$, then it can render $\mathcal{H}^\mathrm{c}(\mathcal{G}^\mathrm{c})$ in~(\ref{eq:graph_inter}) forward invariant. 
\end{proof}

\subsection{Bi-level Optimization}
In this section, we formulate the problem as a bi-level optimization problem to 1) dynamically find the lower level optimal subgraph $\mathcal{G}^\mathrm{c*}$ from real-time strongly connected graph $\mathcal{G}$ to maintain, and 2) minimally modify the upper level task-related controller to enforce the robot system to stay safe and strongly connected. 

At each time step, there may exist multiple subgraphs $\mathcal{G}^\mathrm{c}$ which are strongly connected in the strongly connected graph $\mathcal{G}$, hence one desires to maintain one particular graph $\mathcal{G}^\mathrm{c*}$ which introduces the least control constraint. To maintain each edge $(v_i,v_j)\in\mathcal{G}^\mathrm{c}$ in a candidate graph $\mathcal{G}^\mathrm{c}$, in our problem settings, it will introduce two control constraints in~(\ref{eq:data_cbc}). 
Therefore, the desired graph $\mathcal{G}^\mathrm{c*}$, characterized by edges that represent the least likely to be violated connectivity constraints under the nominal controllers $\mathbf{u}^\mathrm{ref}$, can be found within the collection of all spanning trees $\mathcal{T}$ of $\mathcal{G}$, who has the minimum number of edges ($N-1$) to be preserved. To quantitatively and heuristically assess the level of violating the pairwise connectivity constraints under the nominal controller $\mathbf{u}^\mathrm{ref}$, we hereby introduce the weight $w_{i,j}$:
\begin{align}
    w_{i,j} = -(\dot{h}^\mathrm{gp}_{i,j}(\mathbf{x},\mathbf{u}^\mathrm{ref}) +\gamma h^\mathrm{gp}_{i,j}(\mathbf{x}) + \dot{h}^\mathrm{gp}_{j,i}(\mathbf{x},\mathbf{u}^\mathrm{ref}) +\gamma h^\mathrm{gp}_{j,i}(\mathbf{x})) \label{eq:weight_gaussian}
\end{align}

With the smaller value of weight $w_{i,j}$, it heuristically indicates the less violated control constraints. Given this edge weight definition, the weighted graph $\mathcal{G}$ is thus denoted as $\mathcal{G}=(\mathcal{V},\mathcal{E},\mathcal{W})$ with $\mathcal{W}=\{w_{i,j}\}$. We now redefine each spanning tree $\mathcal{T}^\mathrm{c}\in\mathcal{T}$ as a weighted spanning tee $\mathcal{T}^\mathrm{c}_w = (\mathcal{V},\mathcal{E}^{T},\mathcal{W}^{T})$. Then, the graph $\mathcal{G}^\mathrm{c*}$ can thus be defined as:
\begin{align}
\mathcal{G}^\mathrm{c*}\leftarrow \bar{\mathcal{T}}_w^\mathrm{c}=  \underset{\mathcal{T}^\mathrm{c}_{w} \in \mathcal{T}}{\arg\min} \sum_{(v_i,v_j) \in \mathcal{E}^T} w_{i,j}\label{eq:graph_optimal}
\end{align}

The optimal solution to~(\ref{eq:graph_optimal}) corresponds to the Minimum Spanning Tree (MST) within the communication graph $\mathcal{G}$, where the weights are determined by $\{w_{i,j}\}$. Finally, given the control bound $\alpha_i$ of robot $i$, we reformulate the step-wise bi-level optimization-based control problem as follows. 

\begin{align}
 &\mathbf{u}^* = \argmin_{\mathcal{G}^\mathrm{c},\mathbf{u}} \sum_{i=1}^{N}||\mathbf{u}_i-\mathbf{u}^\mathrm{ref}_i||^2 \label{eq:obj_final}\\
 \text{s.t.} \quad 
 &\mathcal{G}^\mathrm{c}\leftarrow{\mathcal{\bar{T}}}_w^\mathrm{c}= \argmin_{\{\mathcal{T}_w^\mathrm{c'}\}} \sum_{(v_i,v_j)\in \mathcal{E}^{T}}\{w_{i,j}\}\\
&\mathbf{u}\in\mathcal{B}^\mathrm{s}(\mathbf{x})  \textstyle \bigcap 
\mathcal{B}^\mathrm{obs}(\mathbf{x},\mathbf{x}^\mathrm{obs})\bigcap\mathcal{S}_\mathbf{u}(\mathbf{x},\mathcal{G}^\mathrm{c}), \notag\\
&\qquad\;||\mathbf{u}_i|| \leq \alpha_i,\forall i=1,\ldots,N  \notag
\end{align}

Building upon, we summarize our Data-driven Connectivity Maintenance (DCM) algorithm in Algorithm~\ref{alg:Dynamic}. Our algorithm is structured in a centralized manner, hence all robots collect RSSI information together to construct the dataset $\mathcal{D}$, which can be useful to train $h^\mathrm{gp}_{i,j}(\mathbf{x})$. Besides, we assume the $\mathcal{R}_{i,j}(\mathbf{x})$ is time-independent, so the previous time step data will still be stored into the dataset. At each time-step, the robots first collect all the RSSI information on Algorithm~\ref{alg:Dynamic} Line~\ref{alg:collected_Data} based on the data collection criteria as mentioned in Section~\ref{sec:communicaiton_model} and then determine the strongly connected graph $\mathcal{G}$. Following this, we adopt the standard MST algorithm to find the $\bar{\mathcal{T}}_w^\mathrm{c}$ on Algorithm~\ref{alg:Dynamic} Line~\ref{alg:line:tree}. Finally, the optimal controller $\mathbf{u}^{*}$ can be obtained by solving the resultant Quadratic Programming (QP) problem on Algorithm~\ref{alg:Dynamic} Line~\ref{alg:line:find_u_star}. It is worth to note that our control constraints $\mathcal{B}^\mathrm{s}(\mathbf{x})\bigcap\mathcal{B}^\mathrm{obs}(\mathbf{x},\mathbf{x}^\mathrm{obs})\bigcap \mathcal{S}_\mathbf{u}(\mathbf{x},\bar{\mathcal{T}}_w^\mathrm{c})$ manifest as \textbf{\textit{linear inequalities}} with respect to the control $\mathbf{u}$.

\begin{algorithm}
\caption{Data-driven Connectivity Maintenance Algorithm}
    \label{alg:Dynamic}
    \begin{algorithmic}[1]
    \Input{$\mathbf{x}$-the current states (positions) of the robots, $\mathbf{u}^\mathrm{ref}$-the nominal task-related multi-robot controller, $\mathcal{C}_\mathrm{obs}$ the occupied space of the obstacles,
    $\mathcal{D}$ $\gets$ $\mathcal{D}_0$ the initialized training dataset.
    }
    \Output{The desired minimally modified controller $\mathbf{u}^*\in\mathbb{R}^{dN}$ from (\ref{eq:obj_final})}
    \Function{DCM}{$\mathbf{x}$, $\mathbf{u}^\mathrm{ref}$, $\mathcal{C}_\mathrm{obs}$, $\mathcal{D}$}
    \For{Each Time Step}
    \For{$i = 1$ to $N$}
    \For{$j = 1$ to $N$}
    \If{$\mathcal{R}_{i,j}(\mathbf{x}) \geq \psi$ and $i\neq j$} \label{alg:collected_Data} 
    \State Add the {\{($\mathbf{x}_i$, $\mathbf{x}_j$),\;$\mathcal{R}_{i,j}(\mathbf{x})-\epsilon$\}) to the dataset $\mathcal{D}$}
    \EndIf
    \EndFor
    \EndFor
    \For { All Edges $(v_i,v_j)\in \mathcal{E}$ of current strongly connected communication graph $\mathcal{G}=(\mathcal{V},\mathcal{E})$}
     \State Weight assignment: $\mathcal{W}$ $\gets$ $w_{i,j}$ using~(\ref{eq:weight_gaussian})
     \EndFor
     \State Get weighted graph $\mathcal{G}=(\mathcal{V},\mathcal{E},\mathcal{W})$
     \State  Solve $\bar{\mathcal{T}}_w^\mathrm{c}=\argmin_{\{\mathcal{T}_w^\mathrm{c}\}} \sum_{(v_i,v_j)\in \mathcal{E}^{T}} w_{i,j}$ by standard MST algorithm: $\bar{\mathcal{T}}_w^\mathrm{c}\gets$ MST($\mathcal{G}$)\label{alg:line:tree} 
     \State Construct $\mathcal{S}_\mathbf{u}(\mathbf{x},\bar{\mathcal{T}}_w^\mathrm{c})$ using $h^\mathrm{gp}_{i,j}(\mathbf{x})$ and $\dot{h}^\mathrm{gp}_{i,j}(\mathbf{x},\mathbf{u})$ and (\ref{eq:gaussian_partial_derivate}) 
    \State  \Return{$\mathbf{u}^* =\argmin_\mathbf{u} \sum_{i=1}^{N}||\mathbf{u}_i-\mathbf{u}^\mathrm{ref}_i||^2 $} where $\mathbf{u}\in \mathcal{B}^\mathrm{s}(\mathbf{x})\bigcap\mathcal{B}^\mathrm{obs} (\mathbf{x},\mathbf{x}^\mathrm{obs})\bigcap \mathcal{S}_\mathbf{u}(\mathbf{x},\bar{\mathcal{T}}_w^\mathrm{c}),||\mathbf{u}_i|| \leq \alpha, \forall i=1,\ldots,N $ \label{alg:line:find_u_star}
     \EndFor
    \EndFunction
    \end{algorithmic}
\end{algorithm}
\setlength{\textfloatsep}{0pt}

\begin{figure*}[ht]
\centering  
\begin{subfigure}[b]{0.3\textwidth}
\centering
  \includegraphics[width=0.8\linewidth, height=0.8\linewidth]{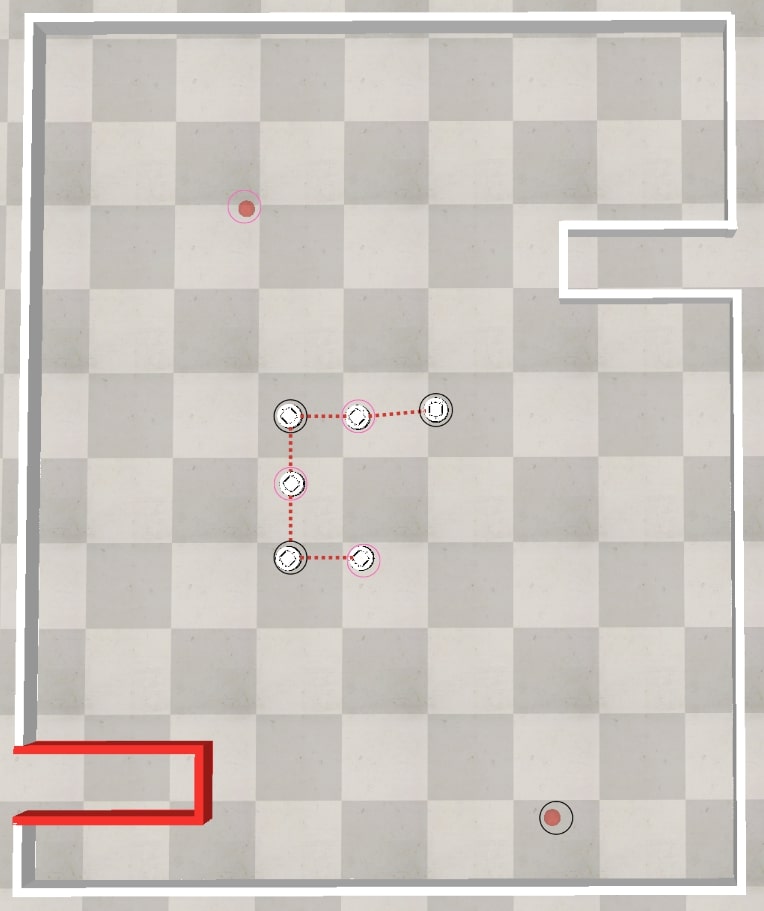}
 \caption{Our Method $t = 0$}
  \label{fig2:subfiga}
  \end{subfigure}
\begin{subfigure}[b]{0.3\textwidth}
\centering
  \includegraphics[width=0.8\linewidth, height=0.8\linewidth]{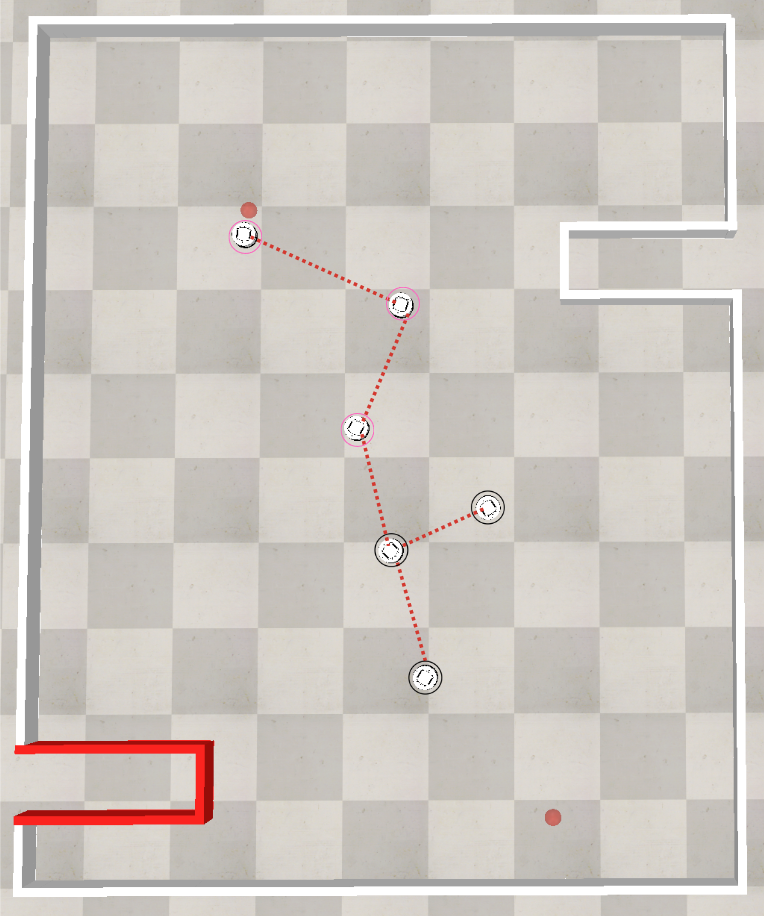}
 \caption{Our Method $t = 350$}
  \label{fig2:subfigb}
   \end{subfigure}
   \begin{subfigure}[b]{0.3\textwidth}
   \centering
\includegraphics[width=0.8\linewidth, height=0.8\linewidth]{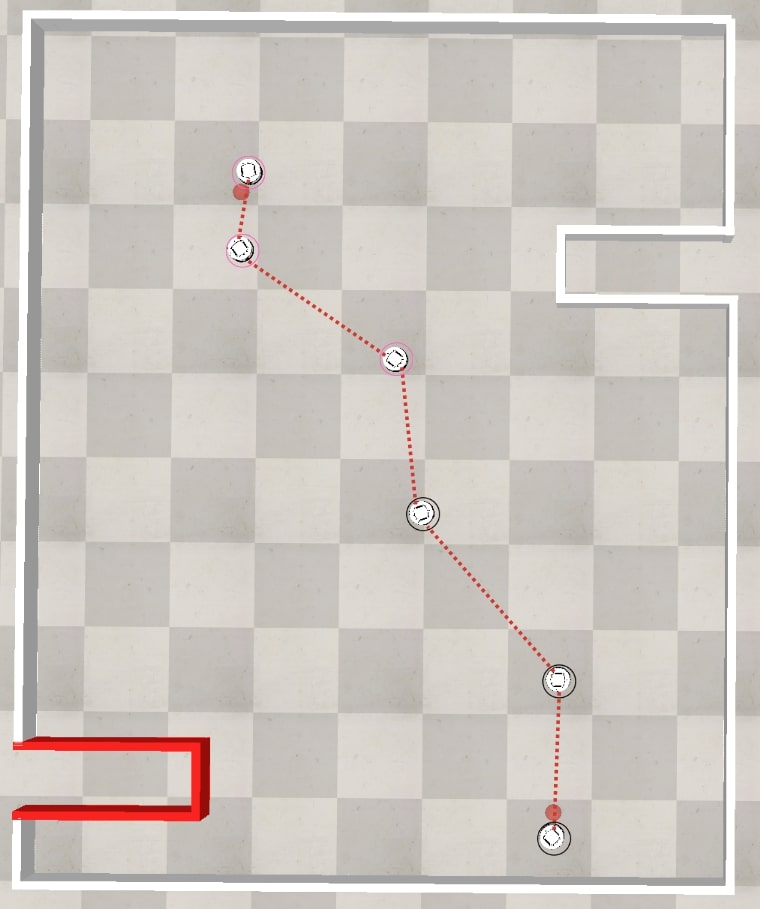}
 \caption{Our Method $t=700$ (Converged)}
  \label{fig2:subfigc}
  \end{subfigure}\\
\begin{subfigure}{0.3\textwidth}
\centering
\includegraphics[width=0.8\linewidth, height=0.8\linewidth]{img/cop_init.jpg}
\caption{MCCST}
  \label{fig3:subfigure1}
  \end{subfigure}
    \begin{subfigure}{0.3\textwidth}
    \centering
\includegraphics[width=0.8\linewidth, height=0.8\linewidth]{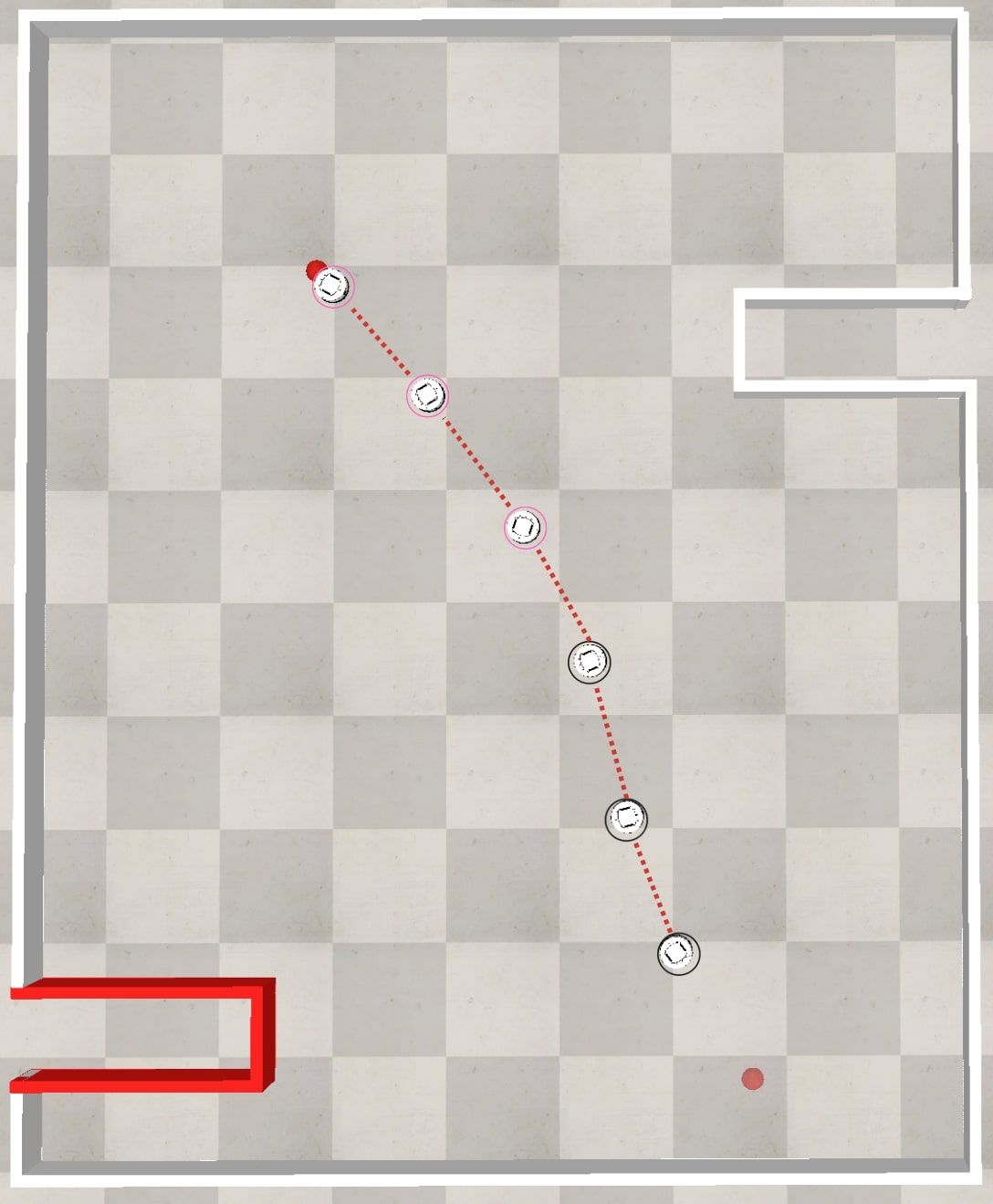}
 \caption{MCCST $t= 379$ ($R_\mathrm{c} = 0.7$m)}
  \label{fig3:subfigure2}
  \end{subfigure}
  \begin{subfigure}{0.3\textwidth}
  \centering
\includegraphics[width=0.8\linewidth, height=0.8\linewidth]{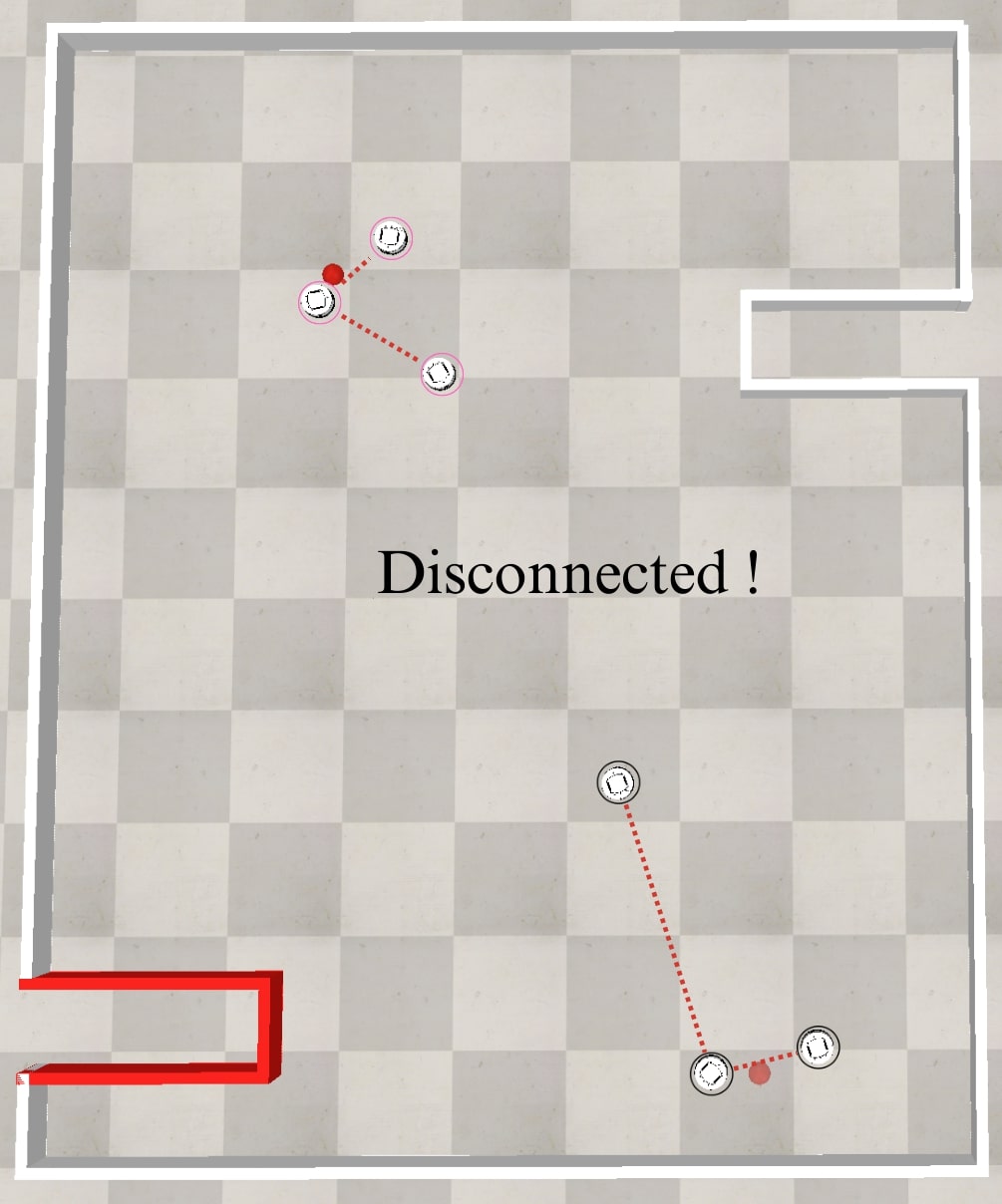}
\caption{MCCST  $t= 700$ ($R_\mathrm{c} = 1.2$m)}
  \label{fig3:subfigure3}
  \end{subfigure}
  \caption{Simulation example of 5 robots and tasked to corresponding colored task places. The red dash lines in this figure denote the currently optimal strongly connected communication graph $\mathcal{G}^\mathrm{c*}$. The white and red boxes represent the obstacles. The robot diameter is $0.16$m. Compared baseline method MCCST \cite{luo2020behavior} under different parameter settings include (e) MCCST ($R_\mathrm{c} = 0.7$m), and (f) MCCST ($R_\mathrm{c} = 1.2$m).}.
  \label{fig:our_simulation}
\end{figure*}
\begin{figure*}[ht]
\centering
\begin{subfigure}[b]{0.3\textwidth}
\centering
  \includegraphics[width=0.8\linewidth]{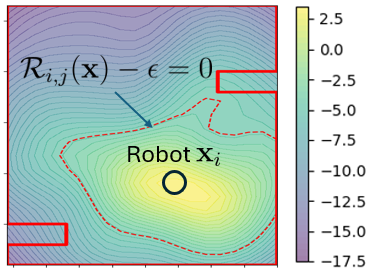}
  \centering
 \caption{Ground truth data}
  \label{fig:ground}
  \end{subfigure}
\begin{subfigure}[b]{0.3\textwidth}
\centering
  \includegraphics[width=0.8\linewidth]{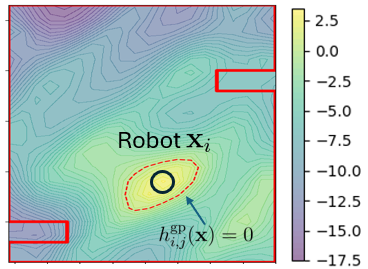}
 \caption{$h^\mathrm{gp}_{i,j}(\mathbf{x})$ at $t =0$}
  \label{fig:prediction_init}
\end{subfigure}
\begin{subfigure}[b]{0.3\textwidth}
\centering
  \includegraphics[width=0.8\linewidth]{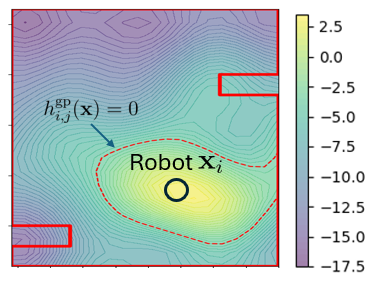}
 \caption{$h^\mathrm{gp}_{i,j}(\mathbf{x})$ at $t =700$}
  \label{fig:prediciton_final}
\end{subfigure}
  \caption{The contour plot for the $h^\mathrm{gp}_{i,j}(\mathbf{x})$ at different time step: a) the ground truth 0-superlevel set calculated by $\mathcal{R}_{i,j}(\mathbf{x})-\epsilon$, b) the predicted $h^\mathrm{gp}_{i,j}(\mathbf{x})$ with 0-superlevel set at $t =0$ denoted as red dash line, and c) the predicted $h^\mathrm{gp}_{i,j}(\mathbf{x})$ with 0-superlevel set at $t=700$ denoted as red dash line.} 
  \label{fig:prediction}
\end{figure*}

\begin{figure*}[ht]
\centering
\begin{subfigure}[b]{0.28\textwidth}
\centering
  \includegraphics[trim={0.55cm 0.5cm 0.5cm 0.5cm}, clip,width=1\linewidth]{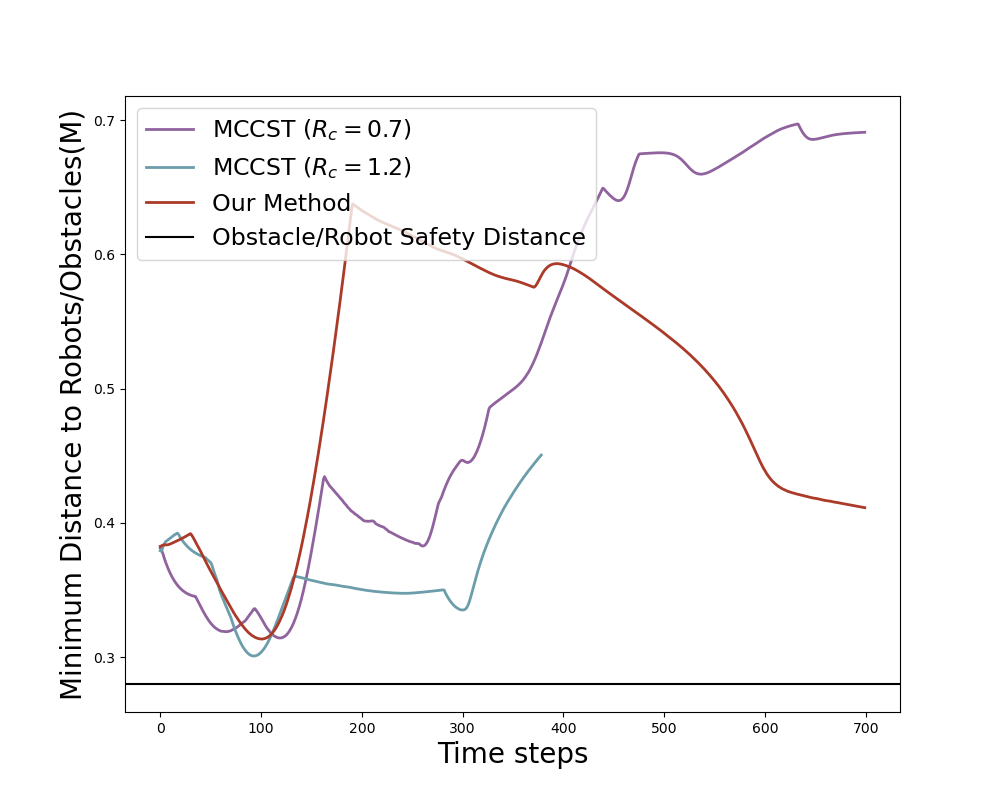}
 \caption{$D_{min}$ to robots/obstacles}
  \label{fig3:safety}
  \end{subfigure}
\begin{subfigure}[b]{0.28\textwidth}
\centering
  \includegraphics[trim={0.55cm 0.5cm 0.5cm 0.5cm}, clip,width=1\linewidth]{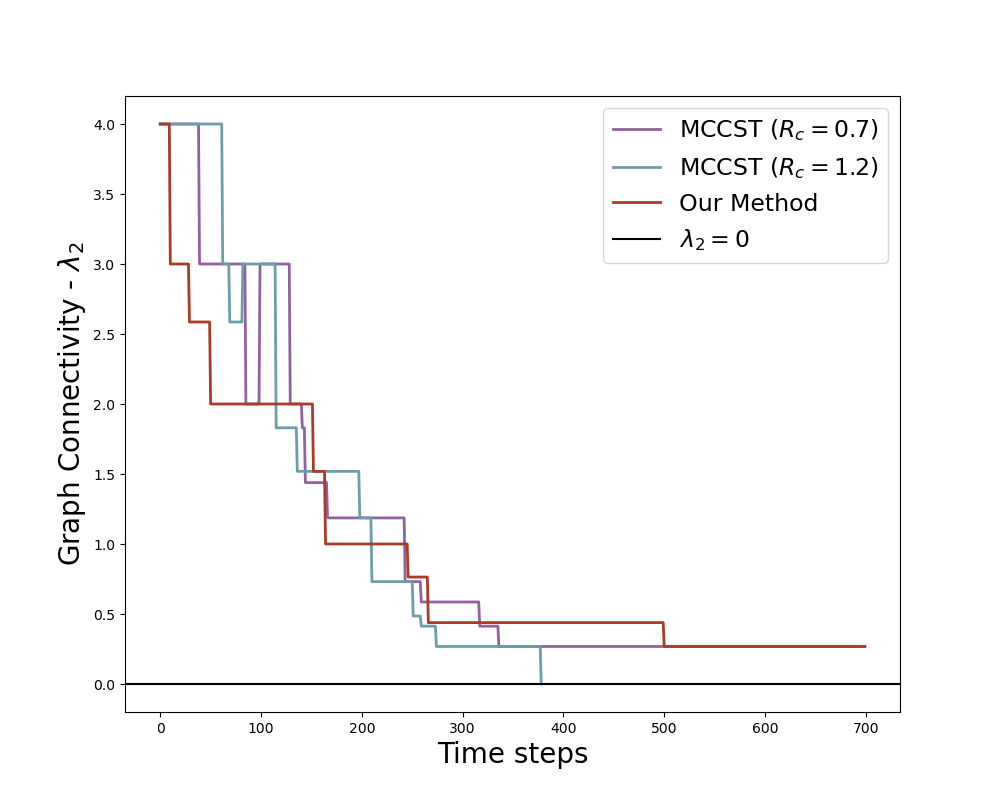}
 \caption{Algebraic connectivity}
  \label{fig3:connectivity}
\end{subfigure}
\begin{subfigure}[b]{0.28\textwidth}
 \centering
  \includegraphics[trim={0.55cm 0.5cm 0.5cm 0.5cm}, clip,width=1\linewidth]{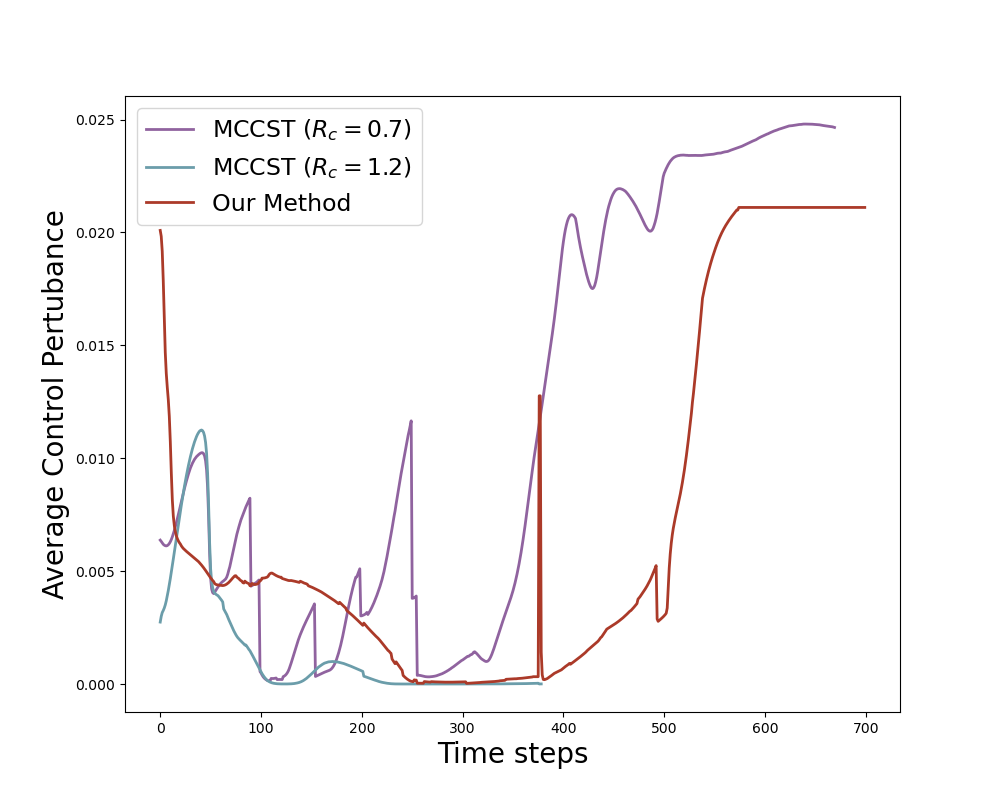}
 \caption{Average control perturbation}
  \label{fig3:control}
\end{subfigure}
  \caption{Comparative analysis of the simulation example shown in Fig.~\ref{fig:our_simulation} with respect to different metrics a) Minimum distance to robots/obstacles $D_{min}$ to verify the safety constraints' satisfaction ($R_\mathrm{s}$ = $R_\mathrm{obs} = 0.28$m), b) Average algebraic connectivity to indicate whether the graph is strongly connected ($\lambda_2 >0$) or not $\lambda_2 =0$, where $\lambda_2$ is the second-smallest eigenvalue of the Laplacian matrix calculated from the adjacency matrix. The elements in our adjacency matrix indicate whether the pairwise robots $i$ and $j$ are strongly connected with each other ($\mathcal{R}_{i,j}(\mathbf{x})\geq -25$dB and $\mathcal{R}_{j,i}(\mathbf{x})\geq -25$dB), and c) Average control perturbation (computed by $\frac{1}{N}\sum_{i=1}^N|| \mathbf{u}_i-\mathbf{u}^\mathrm{ref}_i||^{2}$ measuring the accumulated deviation from nominal controllers).}
  \label{fig:numerical}
\end{figure*}

\begin{figure*}[ht]
\centering
\begin{subfigure}[b]{0.28\textwidth}
 \centering
  \includegraphics[trim={0.55cm 0.5cm 0.5cm 0.5cm}, clip,width=1\linewidth]{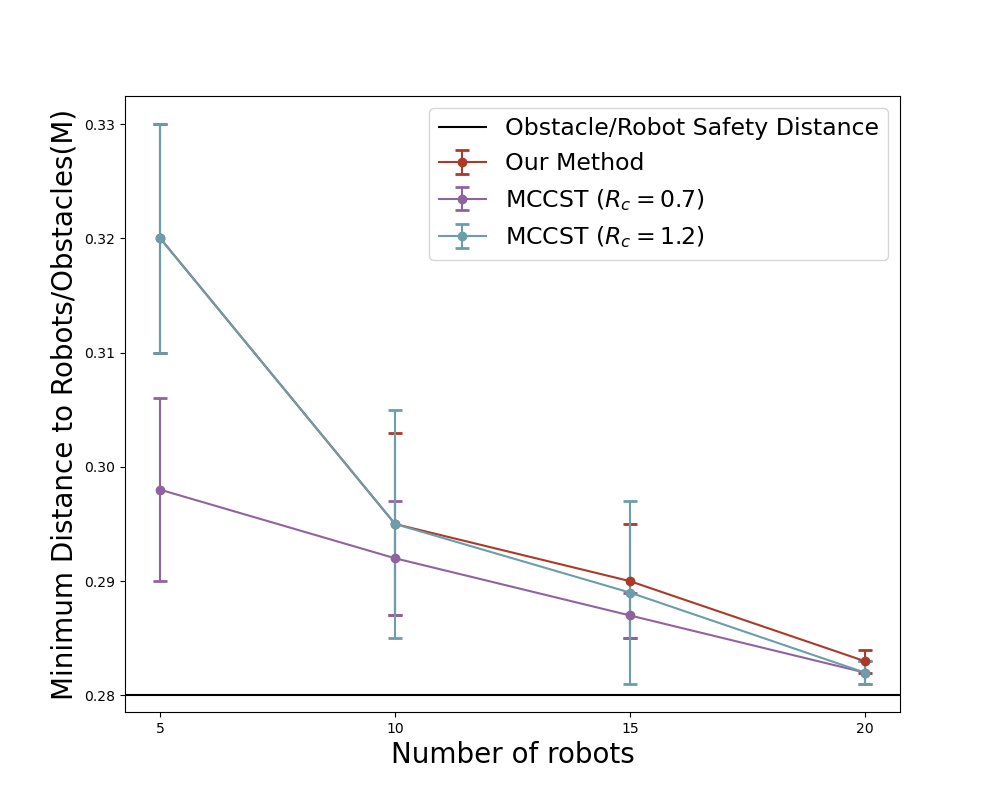}
 \caption{$D_{min}$ to robots/obstacles}
  \label{fig:safe_error}
  \end{subfigure}
\begin{subfigure}[b]{0.28\textwidth}
 \centering
  \includegraphics[trim={0.55cm 0.5cm 0.5cm 0.5cm}, clip,width=1\linewidth]{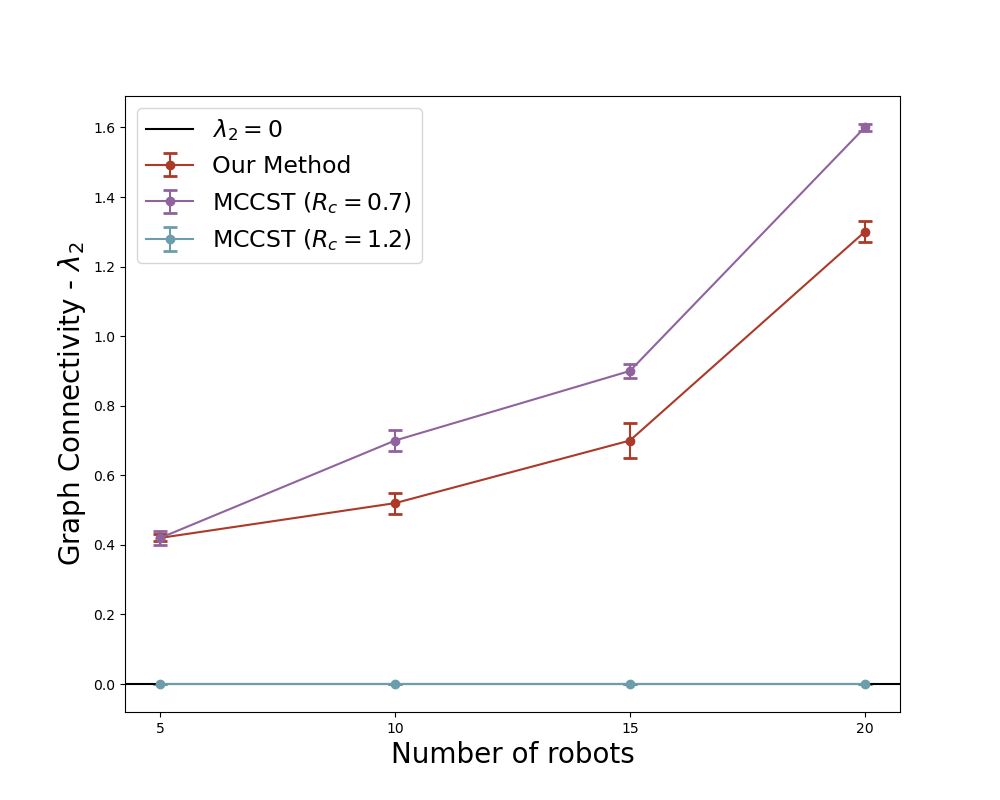}
 \caption{Algebraic connectivity}
  \label{fig:connectivity_error}
\end{subfigure}
\begin{subfigure}[b]{0.28\textwidth}
 \centering
  \includegraphics[trim={0.55cm 0.5cm 0.5cm 0.5cm}, clip,width=1\linewidth]{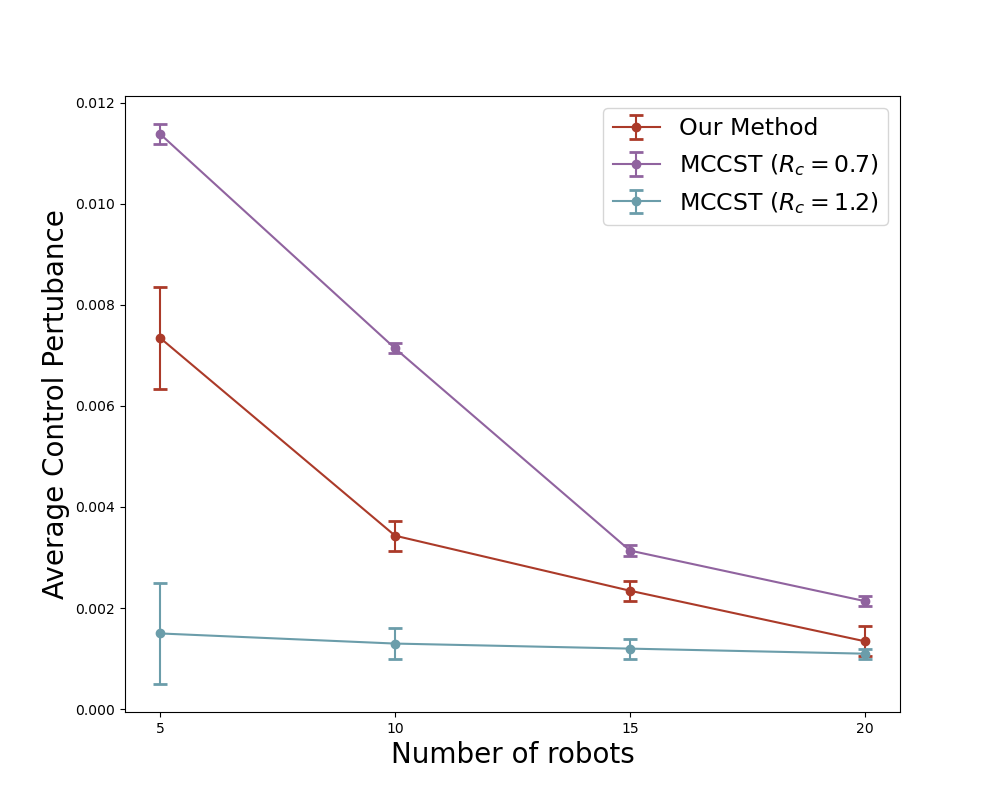}
 \caption{Average control perturbation}
  \label{fig:control_error}
\end{subfigure}
  \caption{Quantitative Results on different sizes of the robot team. Error bars show the standard deviation.}
  \label{fig:qua}
\end{figure*}
\setlength{\textfloatsep}{0pt}

\begin{proposition}\label{proposition:graph}
Assuming $\mathcal{G}(t_0)$ starts as a strongly connected graph, the procedure outlined in Algorithm~\ref{alg:Dynamic} ensures that the resulting communication graph $\mathcal{G}(t)$ will remain strongly connected at all times.
\end{proposition}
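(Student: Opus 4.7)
The plan is a proof by induction over the discrete decision instants of Algorithm~\ref{alg:Dynamic}, with Lemma~\ref{lem:date_Driven_connectivity} (the Data-driven Connectivity Barrier Certificate) as the workhorse that carries connectivity from one decision instant to the next. The core observation is that if the current graph $\mathcal{G}(t)$ is strongly connected, then the MST step is well-defined and returns a spanning tree $\bar{\mathcal{T}}_w^\mathrm{c}\subseteq\mathcal{G}(t)$ whose edges, by definition of inclusion in $\mathcal{G}(t)$, all satisfy the strong-communication condition at time $t$; enforcing the CBF constraints associated with this spanning tree then keeps every one of its edges strongly connected up to the next decision instant.

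More concretely, I would proceed as follows. In the base case, $\mathcal{G}(t_0)$ is strongly connected by hypothesis, so the family of spanning trees $\mathcal{T}$ is non-empty and the MST line in the algorithm yields a valid $\bar{\mathcal{T}}_w^\mathrm{c}(t_0)\subseteq\mathcal{G}(t_0)$; moreover, since every edge $(v_i,v_j)$ of this tree lies in $\mathcal{G}(t_0)$, the initial joint state $\mathbf{x}(t_0)$ lies in $\mathcal{H}^\mathrm{c}(\bar{\mathcal{T}}_w^\mathrm{c}(t_0))$ defined in~(\ref{eq:graph_inter}). For the inductive step, suppose $\mathcal{G}(t)$ is strongly connected and the algorithm has just produced $\bar{\mathcal{T}}_w^\mathrm{c}(t)$ and a control $\mathbf{u}^*\in\mathcal{S}_\mathbf{u}(\mathbf{x},\bar{\mathcal{T}}_w^\mathrm{c}(t))$. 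By Lemma~\ref{lem:date_Driven_connectivity}, $\mathcal{H}^\mathrm{c}(\bar{\mathcal{T}}_w^\mathrm{c}(t))$ is forward invariant under $\mathbf{u}^*$, so $h^\mathrm{gp}_{i,j}(\mathbf{x})\geq 0$ and $h^\mathrm{gp}_{j,i}(\mathbf{x})\geq 0$ are preserved along every edge of $\bar{\mathcal{T}}_w^\mathrm{c}(t)$ throughout the interval up to the next decision instant $t+\Delta t$. Hence $\bar{\mathcal{T}}_w^\mathrm{c}(t)\subseteq\mathcal{G}(t+\Delta t)$, and because any supergraph of a spanning tree on $\mathcal{V}$ is itself strongly connected, $\mathcal{G}(t+\Delta t)$ is strongly connected as well, closing the induction.

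Two technical points have to be addressed carefully. First, the argument requires that $\mathbf{u}^*$ actually exists at every time step, i.e., that the intersection $\mathcal{B}^\mathrm{s}(\mathbf{x})\cap\mathcal{B}^\mathrm{obs}(\mathbf{x},\mathbf{x}^\mathrm{obs})\cap\mathcal{S}_\mathbf{u}(\mathbf{x},\bar{\mathcal{T}}_w^\mathrm{c})$ under the bound $\|\mathbf{u}_i\|\le\alpha_i$ is non-empty; here I would invoke the same unbounded-control or sufficient-feasibility argument already used in the proof of Lemma~\ref{lem:date_Driven_connectivity}, restricting to the tree (rather than to the full graph $\mathcal{G}$) so that only $2(N-1)$ connectivity half-space constraints are active, which is the minimal number needed for spanning connectivity. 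Second, one must be careful that switching from $\bar{\mathcal{T}}_w^\mathrm{c}(t)$ to $\bar{\mathcal{T}}_w^\mathrm{c}(t+\Delta t)$ at the next decision instant does not destroy connectivity; this is precisely where the induction hypothesis pays off, since at the switching instant $\mathcal{G}(t+\Delta t)$ has just been shown to contain the previous spanning tree and is therefore strongly connected, so the new MST computation is again well-posed and the cycle repeats.

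I expect the feasibility of the combined QP constraints, rather than the inductive bookkeeping, to be the main obstacle, because the connectivity certificates and the obstacle/inter-robot safety certificates share the same control variable and there is no a priori guarantee that they are jointly satisfiable under the actuator bounds $\alpha_i$; the cleanest way to handle this in the write-up is to defer to the feasibility analysis cited in the proof of Lemma~\ref{lem:date_Driven_connectivity} and to emphasize that restricting attention to the MST (as opposed to the denser graph $\mathcal{G}$) keeps the number of connectivity constraints minimal, which makes feasibility most plausible.
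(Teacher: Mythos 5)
Your proposal is correct and follows essentially the same route as the paper's own proof: a step-wise induction in which the currently selected MST $\bar{\mathcal{T}}_w^\mathrm{c}(t)\subseteq\mathcal{G}(t)$ is rendered forward invariant over $[t,t+\tau]$ by the Data-driven Connectivity Barrier Certificate of Lemma~\ref{lem:date_Driven_connectivity}, so that $\bar{\mathcal{T}}_w^\mathrm{c}(t)\subseteq\mathcal{G}(t+\tau)$ and the supergraph remains strongly connected before the next MST is recomputed. Your additional attention to joint feasibility of the QP under the safety constraints and actuator bounds goes slightly beyond the paper's argument (which defers this to the feasibility discussion in the proof of Lemma~\ref{lem:date_Driven_connectivity}), but it does not change the substance of the proof.
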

\begin{proof}
If the robot team is initially strongly connected, there must exist a particular spanning tree $\bar{\mathcal{T}}_w^\mathrm{c}(t = t_0)\gets$ MST($\mathcal{G}(t = t_0)$) that is strongly connected ($\bar{\mathcal{T}}_w^\mathrm{c}(t = t_0)\subseteq \mathcal{G}(t=t_0)$). Following Algorithm~\ref{alg:Dynamic}, the robots are constrained by the Data-driven Connectivity Barrier Certificates $\mathcal{S}_\mathbf{u}(\mathbf{x},\bar{\mathcal{T}}_w^\mathrm{c})$ with the current MST $ \bar{\mathcal{T}}_w^\mathrm{c}(t= t_0)$ within $t\in[t_0,t_0+\tau]$. With Lemma~\ref{lem:date_Driven_connectivity}, if any Lipschitz continuous controller $\mathbf{u}(\mathbf{x}(t)) \in \mathcal{S}_\mathbf{u}(\mathbf{x},\bar{\mathcal{T}}_w^\mathrm{c}(t=t_0))$ for all $t\in [t_0,t_0 +\tau]$, then it guarantees $\mathbf{x}(t) \in \mathcal{H}^\mathrm{c}(\mathcal{G}^\mathrm{c}(t))$ within $t\in [t_0,t_0+\tau]$. At the next time-step $t_1 = t_0 + \tau$, it is guaranteed that $\bar{\mathcal{T}}_w^\mathrm{c}(t=t_0) \subseteq \mathcal{G}(t=t_1)$. Thus, we can conclude that the step-wise $\bar{\mathcal{T}}_w^\mathrm{los'}(t)$ is always strongly connected resulting in the supergraph $\mathcal{G}(t)$ is always strongly connected. 
\end{proof}

\section{Results}
\subsection{CoppeliaSim Simulation}
To characterize the spatial variance of the wireless signal strength, we built environment as shown in Fig.~\ref{fig1:real_world} for constructing the ground truth data. The detailed data collection process and the experiment video are provided in \url{https://github.com/wenhaol/DCM-RSSI}.
To validate our algorithm, we conducted simulations using CoppeliaSim platform with 5 Khepera IV robots under differential drive dynamics, where controllers from our Data-driven Connectivity Maintenance method are applied using kinematics mapping from \cite{farias2017khepera}. To use the real-world RSSI data, we build the same environment in the CoppeliaSim platform as shown in Fig.~\ref{fig2:subfiga}. We project the robot position to the nearest points in the dataset, to access the corresponding RSSI value. The kernel hyper-parameters in our experiments are set as $\sigma_f = 1$ and $l=0.5$. Besides, we set the $\epsilon = -25$dB and $\psi = -30$dB as the threshold for our considered communication model as mentioned in Section~\ref{sec:communicaiton_model}. Robots with different colors execute biased rendezvous behaviors toward corresponding colored task places. Fig. \ref{fig2:subfiga}-~\ref{fig2:subfigc} demonstrate that our proposed method consistently ensures that $\mathcal{G}$ is strongly connected over time. We also implemented MCCST \cite{luo2020behavior} method with different parameters as shown in Fig.~\ref{fig3:subfigure2}-\ref{fig3:subfigure3}. In Fig.~\ref{fig3:subfigure2} (MCCST with $R_\mathrm{c} = 0.7$m). It is observed that robots can be strongly connected with each other and ensure required safety, however due to predefined limited communication range, robots motion is constrained by the distance-based connectivity control constraints and cannot perform their original tasks well. In Fig.~\ref{fig2:subfigc} (MCCST \cite{luo2020behavior} with $R_\mathrm{c} = 1.2$m), robots cannot be strongly connected with each other due to predefined long communication distance. 
In Fig.~\ref{fig:prediction}, we show the learned $h^\mathrm{gp}_{i,j}(\mathbf{x})$ with its corresponding 0-superlevel set. 
It demonstrates that, with an increasing amount of RSSI data collected from all robots, the learned $h^\mathrm{gp}_{i,j}(\mathbf{x})$ with its corresponding 0-superlevel set is closer to the actual measurement. 
In summary, our method could use online RSSI data to adaptively “change the communication range” to ensure the $\mathcal{G}$ is strongly connected while achieving the best task performance. In Fig.~\ref{fig3:safety}, it further verifies that all three methods could ensure the required safety. Fig.~\ref{fig3:connectivity} shows that MCCST with $R_\mathrm{c} = 1.2$m fails to maintain a strong connection. Besides, in Fig.~\ref{fig3:control} the MCCST with $R_\mathrm{c} = 0.7$m introduces the most average control perturbation, making it hard for robots to perform their designated tasks.
\subsection{Quantitative Results}
To assess the computational efficiency and scalability, we conducted experiments with up to 20 robots, running 10 trials for each batch with varying robot counts. Fig.~\ref{fig:safe_error} demonstrates that safety is guaranteed for all methods. Fig.~\ref{fig:connectivity_error} shows that our method and MCCST with $R_\mathrm{c} = 0.7$m, 
can maintain the required connectivity, while the MCCST with $R_\mathrm{c} = 1.2$m fails ($\lambda_2 = 0$) as this fixed communication range is too far for some transmitter and receiver pairs. Fig.~\ref{fig:control_error} indicates that our method achieves less average control perturbation than $R_\mathrm{c}= 0.7$m.  Hence, our algorithm ensures flexible connectivity maintenance with the best task performance among the robots.
\section{Conclusion}
In this paper, we introduced a novel Data-driven Connectivity Maintenance (DCM) framework that maintains connectivity for multi-robot systems in a realistic environment. This framework encodes the novel Data-driven Connectivity Barrier Certificates, which, by combining Gaussian Processes (GP) and control barrier function (CBF), delineate an admissible control space that ensures a strong connection among the robot team. Simulation results demonstrate the effectiveness of our method. Future directions include decentralizing the DCM algorithm for distributed multi-robot systems.
\bibliographystyle{IEEEtran} 
\bibliography{IROS/ArXiv}

\end{document}